\documentclass[11pt, a4paper]{article}

\usepackage{amsmath}
\usepackage{amssymb}
\usepackage{mathtools}
\usepackage{graphicx}
\usepackage{hyperref}
\usepackage{geometry}
\usepackage{algorithm}
\usepackage{algpseudocode}
\usepackage{booktabs}
\usepackage{caption}
\usepackage{subcaption}
\usepackage{array}
\usepackage{physics}
\usepackage{enumitem}
\usepackage[svgnames]{xcolor}
\usepackage{wrapfig}
\usepackage{placeins}

\usepackage{newtxtext}
\usepackage{newtxmath}
\usepackage{bm}

\usepackage{amsthm}
\usepackage{thmtools}
\usepackage{thm-restate}

\theoremstyle{plain}
\newtheorem{theorem}{Theorem}[section]

\theoremstyle{definition}
\newtheorem{definition}[theorem]{Definition}

\theoremstyle{remark}

\DeclareMathOperator{\hsic}{HSIC}
\DeclareMathOperator{\dcov}{dCov}

\usepackage{siunitx}

\hypersetup{
  colorlinks = true,
  allcolors = DarkRed,
}

\usepackage[backend=biber,sorting=none]{biblatex}
\DeclareFieldFormat{urldate}{}

\DeclareSourcemap{
  \maps[datatype=bibtex]{
    \map{
      \step[fieldset=primaryclass, null]
    }
  }
}

\makeatletter
\def\@fnsymbol#1{\ensuremath{\ifcase#1\or a\or b\or c\or d\or e\or f\or g \or h \else\@ctrerr\fi}}
\makeatother



\title{From Layers to Networks: Comparing Neural Representations via Diffusion Geometry}

\author{%
  Atharva Khandait\thanks{Department of Mathematical Sciences, Chalmers University of Technology and the University of Gothenburg, SE-412 96 Gothenburg, Sweden.\\ Emails: khandait@student.chalmers.se, gerken@chalmers.se} \and
  Jan E.\ Gerken\footnotemark[1]
}

\date{}

\begin{document}

\maketitle

\begin{abstract}
  Diffusion geometry is a manifold learning framework that uses random walks defined by Markov transition matrices to characterize the geometry of a dataset at multiple scales. We use diffusion geometry for neural representations, incorporating tools from multi-view learning into this field for the first time. Our key technical observation is that a broad class of similarity measures based on representational similarity matrices (RSMs) admits a closed-form equivalent formulation in terms of row-stochastic Markov matrices, opening the door to manipulations from diffusion geometry. As a first application, we develop multi-scale variants of Centered Kernel Alignment and Distance Correlation, which utilise the $t$\textsuperscript{th} power of the underlying transition matrix to probe the data geometry at adjustable diffusion scales. Going further, we introduce variants of these measures which fuse the Markov matrices of several layers via alternating diffusion into a single operator that captures the network's joint sample geometry, allowing similarity to be computed across multiple layers and shifting the comparison from layer-to-layer to network-to-network. We perform extensive numerical experiments, evaluating our measures on the Representational Similarity (ReSi) benchmark comprising 14 architectures trained on 7 datasets across three different domains. Our methods achieve SoTA results in accuracy and output correlation for both language and vision tasks across different models. We furthermore show SoTA performance on an additional benchmark evaluating on out-of-distribution data.

  
\end{abstract}

\section{Introduction}

Similarity measures of neural representations have become a central tool for analyzing and comparing trained networks~\cite{kornblith2019similarity, kriegeskorte2008rsa, klabunde2025resi}, as well as for network optimization, for instance in knowledge distillation~\cite{tung2019similarity}. Each layer of a neural network maps the input data into a representation space, inducing a discrete sample of an underlying data manifold. Similarity measures quantify, layer by layer, how the geometry of this sample compares across networks, training runs, or architectural choices. The most widely used families of measures, including Centered Kernel Alignment (CKA)~\cite{kornblith2019similarity} and Distance Correlation (DistCorr)~\cite{szekely2007distance}, are built on representational similarity matrices (RSMs)~\cite{kriegeskorte2008rsa}, which abstract away the ambient dimension by recording the pairwise similarities of activations within a single layer.

A complementary line of work, diffusion geometry~\cite{coifman2006diffusion}, characterizes the intrinsic geometry of a dataset through a Markov random walk. The transition matrix $\bm{P}$ encodes local connectivity, while its powers $\bm{P}^{t}$ aggregate length-$t$ transitions and reveal the geometry of the data at successively coarser scales. Diffusion-geometric tools have been particularly successful in multi-view learning~\cite{xu2013survey}, where an underlying phenomenon must be extracted from several partially redundant observations. In particular, alternating diffusion (AD)~\cite{lederman2018alternating} fuses the Markov operators of several views by forming their product: under a conditional independence assumption, the resulting operator describes a random walk on the shared latent space, while view-specific nuisance variability is suppressed.

The multi-view learning setting is a natural fit for neural representations, since the successive layers of a network can be understood as multiple, partially redundant views of the same input. Despite this, diffusion-geometric and multi-view learning ideas have not been brought to bear on similarity measures of neural representations. We close this gap. Our key technical observation is that a broad class of RSM-based similarity measures admits a closed-form, equivalent reformulation in terms of row-stochastic Markov matrices (Theorem~\ref{thm:markovablemeasures}): any RSM-based measure that acts on centered RSMs and is invariant under separate positive rescalings of its arguments can be rewritten in terms of an explicit Markov matrix obtained from the RSM by an affine shift-and-rescale. This reformulation opens the door to manipulations from diffusion geometry in the design of similarity measures.

We use this lens to derive two new families of measures, instantiated for both CKA and DistCorr. First, by utilising the $t$-th power of the underlying Markov matrix, we obtain multi-scale variants MS-CKA and MS-DistCorr that probe the data geometry at adjustable diffusion scales. Second, by fusing the Markov matrices of several layers via alternating diffusion into a single operator, we obtain AD-CKA and AD-DistCorr. The latter shifts the comparison from layer-to-layer to network-to-network: the fused operator captures geometric features characteristic of an entire network rather than of any single layer, so the resulting measures depend on multiple layers of each network simultaneously.


Our main contributions are as follows:
\begin{itemize}
  \item We propose a framework, inspired by multi-view learning, that reformulates a broad class of representational similarity measures in terms of Markov matrices and thereby makes them amenable to tools from diffusion geometry, including generalizations that take several layers into account simultaneously.
  \item We prove (Theorem~\ref{thm:markovablemeasures}) that any centered, scale-invariant RSM-based similarity measure admits a closed-form Markov reformulation, and verify that both CKA and DistCorr fall within this class. Building on this, we introduce multi-scale variants MS-CKA and MS-DistCorr and alternating-diffusion variants AD-CKA and AD-DistCorr, the latter of which compare networks at the level of multiple layers rather than individual layer pairs.
  \item We evaluate the proposed measures on the  Representational Similarity benchmark~\cite{klabunde2025resi} and on out-of-distribution accuracy correlation~\cite{ding2021grounding}, and find that our measures set a new SoTA in several evaluation settings across different architectures for language and vision tasks.
  
  
\end{itemize}


\section{Related Works}

\paragraph{Representational similarity measures.}
Quantifying similarity between neural representations has produced a sizeable family of measures, including representational similarity analysis~\cite{kriegeskorte2008rsa}, CKA~\cite{kornblith2019similarity}, CCA-based variants such as SVCCA~\cite{raghu2017svcca} and PWCCA~\cite{morcos2018insights}, and distance correlation~\cite{zhen2022versatile}; \cite{klabunde2025similarity} provides a recent taxonomic survey. With few exceptions, these measures compare a single pair of layers, whereas our construction aggregates information across layers and extends two of the above (CKA and DistCorr) to this multi-layer setting. Existing network-to-network comparisons often reduce the problem to a best-match layer heuristic~\cite{huh2024platonic}, computing similarity for all cross-network layer pairs and then retaining the maximum score, while recent work~\cite{shah2026alignment} has proposed global alignment via optimal transport, which addresses a related but not identical alignment problem.

\paragraph{Benchmarking similarity measures.}
Several benchmarks evaluate such measures along complementary axes: \cite{ding2021grounding} use sensitivity/specificity tests, ReSi~\cite{klabunde2025resi} provides a comprehensive suite covering multiple architectures and tasks, and \cite{bo2024evaluating} compare measures via functional correspondence with behavioural metrics. We evaluate our method on ReSi and benchmarks from~\cite{ding2021grounding}.

\paragraph{Diffusion geometry and alternating diffusion.}
Diffusion maps~\cite{coifman2006diffusion,coifman2005geometric} and its fractional variant~\cite{antil2021fractional} construct a multi-scale geometry of a dataset from the eigendecomposition of a Markov operator, and form the mathematical basis of our Markov-matrix formulation. Alternating diffusion (AD)~\cite{lederman2015alternating,lederman2018alternating} extends this idea to two views by composing their Markov operators to recover the geometry of common latent variables; \cite{talmon2019latent} sharpens the underlying analysis, and \cite{katz2019alternating} generalises AD to more than two views, the variant most directly relevant to fusing $L$ layers. AD has been applied to multimodal sensor fusion in audio-visual settings~\cite{dov2018audio} and to fuse fMRI connectivity matrices for IQ prediction~\cite{xiao2019alternating}. AD originates in the multi-view learning literature, surveyed in~\cite{xu2013survey,li2019survey,sun2013survey}, where it sits among non-linear fusion methods as opposed to alignment-based approaches such as CCA.

\paragraph{Diffusion geometry on neural representations.}
Diffusion-geometric tools have also been applied to deep networks. M-PHATE~\cite{gigante2019visualizing}, building on PHATE~\cite{moon2019visualizing}, uses diffusion-based embeddings to visualise how hidden representations evolve during training; \cite{abel2024exploring} define a diffusion-geometric distance between hidden-layer representations to construct a manifold of networks; and \cite{fasina2023neural} learn Fisher information metrics from point clouds. None of these works yields a similarity measure between two networks, the gap addressed by our method.


\section{Background}
In this section, we lay the groundwork for our method by introducing two families of concepts. We first review representational similarity matrices (RSMs) and the scalar similarity measures derived from them, focusing on Centered Kernel Alignment (CKA) and Distance Correlation (DistCorr). We then introduce the multi-view learning framework and Alternating Diffusion (AD), a principled method for extracting shared geometric structure from multiple Markov matrices, which forms the mathematical backbone of our approach.

\subsection{Representational similarity measures}
\label{subsec:rep_sim_m}

A neural network $f = f^{(L)} \circ f^{(L-1)} \circ \cdots \circ f^{(1)}$ applied to $N$ inputs $\{\bm{x}_i\}_{i=1}^{N}$ with design matrix $\bm{X}=(\bm{x}_{1}, \bm{x}_{2},\dots,\bm{x}_{N})^{\top}$ yields, at each layer $l$, a \textit{representation} $\bm{R} := \bm{R}^{(l)} = \bigl(f^{(l)} \circ \cdots \circ f^{(1)}\bigr)(\bm{X}) \in \mathbb{R}^{N \times D}$, where the rows $\bm{R}_{i,:} \in \mathbb{R}^{D}$ are the \textit{instance representations}~\cite{klabunde2025resi}. A \textit{representational similarity measure} (or \textit{similarity measure} for brevity) is a function $m : \mathbb{R}^{N \times D} \times \mathbb{R}^{N \times D'} \to \mathbb{R}$ that assigns a scalar score $m(\bm{R}_1, \bm{R}_2)$ to a pair of representations, which may come from different layers of the same or different networks~\cite{klabunde2025resi}. 

Representational Similarity Matrices (RSMs)\footnote{We will use the abbreviation RSM only to refer to representational similarity matrices, not to representational similarity measures.} capture the pairwise similarity structure among a fixed set of inputs, thereby abstracting away the ambient representation space. The RSM $\bm{S} \in \mathbb{R}^{N \times N}$ of a representation $\bm{R}$ is defined element-wise as $S_{ij} = s\!\left(\bm{R}_{i,:}, \bm{R}_{j,:}\right)$, where $s : \mathbb{R}^D \times \mathbb{R}^D \to \mathbb{R}$ is a chosen instance-wise similarity function. Common choices include Pearson correlation~\cite{kriegeskorte2008rsa} and kernel functions~\cite{kornblith2019similarity}. Hence, the RSM is the Gram matrix of $s$ over the samples in $\bm{R}$.

Among the several families of similarity measures~\cite{klabunde2025resi}, we focus on \textit{RSM-based} measures, for which $m(\bm{R}_1, \bm{R}_2) = m_{\text{RSM}}(\bm{S}_1, \bm{S}_2)$ for some function $m_{\text{RSM}}$ acting on the RSMs $\bm{S}_1,\bm{S}_2\in\mathbb{R}^{N\times N}$ of the two representations. We restrict our attention to two specific instances, CKA and DistCorr, both for empirical reasons and because they admit natural Markov matrix analogues (see Section~\ref{subsec:sim_markov}).

\paragraph{Centered Kernel Alignment (CKA).}
CKA is among the most widely used similarity measures for neural representations. It uses kernel functions as instance-wise similarity functions $s$ to construct the RSMs~\cite{kornblith2019similarity} and quantifies their alignment via the Hilbert-Schmidt Independence Criterion (HSIC)~\cite{gretton2005hsic} according to
\begin{align}
  m_{\text{CKA}}(\bm{S}_1, \bm{S}_2) = \frac{\hsic(\bm{S}_1, \bm{S}_2)}{\sqrt{\hsic(\bm{S}_1, \bm{S}_1)\,\hsic(\bm{S}_2, \bm{S}_2)}}\,,\label{eq:1}
\end{align}
where the empirical estimator of the HSIC is given by
\begin{align}
  \hsic(\bm{S}_1, \bm{S}_2) = \frac{1}{(N-1)^2}\text{tr}(\bm{S}_1\bm{H}\bm{S}_2\bm{H})
  \quad\text{with}\quad
  \bm{H}=\bm{I} - \frac{1}{N}\bm{1}\bm{1}^\top\,,\label{eq:2}
\end{align}
where $\bm{H}$ is the column-wise centering matrix. Here and in the following, $\bm{I}\in\mathbb{R}^{N\times N}$ denotes the unit matrix and $\bm{1}\in\mathbb{R}^{N}$ the vector whose components are all one.

Two common choices of kernel are the linear kernel $s(\bm{r}_i, \bm{r}_j) = \bm{r}_i^\top \bm{r}_j$ and the RBF kernel $s(\bm{r}_i, \bm{r}_j) = \exp\!\left(-\|\bm{r}_i - \bm{r}_j\|_2^2 / (2\sigma^2)\right)$~\cite{kornblith2019similarity}, giving rise to \emph{Linear CKA} and \emph{RBF CKA}, respectively.

\paragraph{Distance Correlation (DistCorr).}
Distance Correlation~\cite{szekely2007distance} originates from nonparametric tests of statistical independence between random variables, but serves equally well as a representation similarity measure. Assuming the RSMs are mean-centered in both rows and columns, the squared sample distance covariance is defined by $\dcov^2(\bm{S}_1, \bm{S}_2) = \frac{1}{N^{2}}\sum_{i,j=1}^{N}(\bm{S}_1)_{ij}(\bm{S}_2)_{ij}$, yielding the similartiy score
\begin{align}
  m_{\text{DistCorr}}(\bm{S}_1, \bm{S}_2) = \frac{\dcov^2(\bm{S}_1, \bm{S}_2)}{\sqrt{\dcov^2(\bm{S}_1, \bm{S}_1)\,\dcov^2(\bm{S}_2, \bm{S}_2)}}\,.
  \label{eq:6}
\end{align}

\subsection{Multi-view learning and diffusion geometry}\label{subsec:ad}

Multi-view learning addresses settings in which a single underlying phenomenon is observed through several modalities, sensors, or feature extractors~\cite{xu2013survey}. Each \textit{view} typically combines signal that is shared across modalities with view-specific nuisance variability, and the aim is to integrate the views in a way that preserves the shared structure while attenuating the view-specific factors. We focus on \textit{alternating diffusion} (AD), a nonlinear manifold-learning technique introduced by~\cite{lederman2018alternating} that fuses two or more views by alternately running a diffusion process on each.

\paragraph{Markov matrices from a single view.}
Diffusion-based methods describe the geometry of a dataset through a Markov chain whose transitions concentrate on points that lie close to each other in the input space~\cite{coifman2006diffusion}. Given $N$ samples in a single view $\bm{V} \in \mathbb{R}^{N \times D}$, one builds an affinity matrix $\bm{W} \in \mathbb{R}^{N \times N}$ from a local kernel, for instance the Gaussian kernel $W_{ij} = \exp\!\left(-\|\bm{V}_{i,:} - \bm{V}_{j,:}\|_2^2 / \varepsilon\right)$, and row-normalizes it into a row-stochastic \textit{Markov matrix} $\bm{P} \in \mathbb{R}^{N \times N}$ with $P_{ij} = W_{ij} / \sum_{k} W_{ik}$. The entry $P_{ij}$ is the one-step transition probability of a random walk on the samples, and the $t$-th matrix power $\bm{P}^t$ encodes $t$-step transition probabilities, providing a multiscale description of the data manifold at progressively coarser resolutions~\cite{coifman2006diffusion}.

\paragraph{Alternating diffusion across views.}
AD assumes that two views observe a shared latent phenomenon corrupted by view-specific nuisance factors. Following~\cite{lederman2018alternating}, let $X$ be a hidden random variable encoding the shared content and $Y, Z$ be view-specific hidden random variables. The two views are generated through possibly nonlinear observation maps $\bm{V}_1 = g(X, Y)$ and $\bm{V}_2 = h(X, Z)$, and we observe $N$ aligned samples corresponding to the same realizations of $X$. The crucial modelling assumption is that $Y$ and $Z$ are \emph{conditionally independent given $X$}, formalizing the idea that they are independent sources of view-specific noise. Letting $\bm{P}_1, \bm{P}_2 \in \mathbb{R}^{N \times N}$ be the per-view Markov matrices constructed as above, the \textit{alternating-diffusion operator} is the row-stochastic product $\bm{P}_{\text{AD}} = \bm{P}_2\,\bm{P}_1$. A transition from sample $i$ to sample $j$ has appreciable probability under $\bm{P}_{\text{AD}}$ only if there exists an intermediate sample $k$ that is close to $i$ in the first view \emph{and} close to $j$ in the second view, so pairs that look similar only because of view-specific variability are penalized while pairs sharing the same value of $X$ are reinforced. The two orderings $\bm{P}_2\,\bm{P}_1$ and $\bm{P}_1\,\bm{P}_2$ share the same non-zero eigenvalues and induce equivalent diffusion geometries on the common variable, and the construction extends to $M \geq 2$ views by taking the product of all per-view Markov matrices~\cite{lederman2018alternating}.

The alternation isolates the shared geometry in a precise sense: under the conditional-independence assumption, the action of $\bm{P}_{\text{AD}}$ on observables coincides, after marginalizing over $Y$ and $Z$, with that of an effective diffusion operator on the latent space of $X$ alone (\cite{lederman2018alternating}, Thm.~3 and Cor.~4). Consequently, the diffusion geometry induced by $\bm{P}_{\text{AD}}$ is precisely that of the unobserved common variable $X$, and diffusion distances computed from $\bm{P}_{\text{AD}}$ approximate, up to a constant, true diffusion distances on the manifold of $X$ (\cite{lederman2018alternating}, Thm.~5). The guarantee holds for arbitrary nonlinear $g$ and $h$, in contrast to linear approaches such as canonical correlation analysis.


\section{Diffusion geometry for representational similarity matrices}
\label{sec:method}
In order to combine the concepts introduced above, we will first rewrite a large class of similarity measures in terms of Markov matrices. These are then amenable to techniques from diffusion geometry.

\subsection{Similarity measures in terms of Markov matrices}
\label{subsec:sim_markov}
In this subsection, we connect representational similarity measures to Markov matrices. The first step is to recognize that a broad class of RSM-based similarity measures $m(\bm{R}_1, \bm{R}_2) = m_{\text{RSM}}(\bm{S}_1, \bm{S}_2)$ can be reformulated equivalently as $m_{\bm{P}}(\bm{P}_1, \bm{P}_2)$, where $\bm{P}_1, \bm{P}_2 \in \mathbb{R}^{N\times N}$ are row-stochastic matrices obtained from $\bm{S}_1$ and $\bm{S}_2$ by an explicit shift-and-rescale.

Informally, the main result of this subsection is that any RSM-based similarity measure that acts on centered RSMs and is invariant under separate positive rescalings of its arguments admits such a Markov reformulation via an affine embedding of the centered RSM into the space of row-stochastic matrices. This reformulation is the bridge that allows us to bring the multi-view learning machinery of Section~\ref{subsec:ad} to bear on representational similarity in the next subsections.

\begin{restatable}[]{theorem}{markovablemeasures}
  \label{thm:markovablemeasures}
  Let $m_{\text{RSM}}$ be an RSM-based similarity measure that can be written as
  \begin{align}
    m_{\text{RSM}}(\bm{S}_1,\bm{S}_2) = \psi(\bm{C_{q}}(\bm{S}_1), \bm{C_{q}}(\bm{S}_2))\,,
  \end{align}
  where  $\psi$ is invariant to separate positive rescalings,
  \begin{align}
    \psi(a\bm{A}, b\bm{B}) = \psi(\bm{A}, \bm{B}) \qquad \forall\ a, b > 0\,,
  \end{align}
  $\bm{q}\in\mathbb{R}^{N}$ is a probability vector with positive components and $\bm{C_{q}} : \mathbb{R}^{N \times N} \to \mathbb{R}^{N \times N}$ is a $\bm{q}$-weighted centering operator satisfying
  \begin{align}
    \bm{C_{q}}^2 = \bm{C_{q}}, \qquad \bm{C_{q}}(\bm{M})\mathbf{1} = \bm{0}, \qquad \bm{C_{q}}(\mathbf{1}\bm{q}^\top) = \bm{0}\label{eq:4}
  \end{align}
  for any $\bm{M}\in\mathbb{R}^{N\times N}$\,.

  Let furthermore $\bm{P}(\bm{S}):\mathbb{R}^{N\times N}\rightarrow\mathbb{R}^{N\times N}$ be defined by
  \begin{align}
    \bm{P}(\bm{S}) = \mathbf{1}\bm{q}^\top + \alpha(\bm{S}) \bm{C_{q}}(\bm{S})\,,
    \quad\text{where}\quad
    \alpha(\bm{S})=\min_{(\bm{C_{q}}(\bm{S}))_{ij}\neq 0}\frac{q_{j}}{|(\bm{C_{q}}(\bm{S}))_{ij}|}\,.
  \end{align}

  Then, $\bm{P}(\bm{S}_1), \bm{P}(\bm{S}_2)$ are Markov matrices and $m_{\text{RSM}}$ can be written as
  \begin{align}
    m_{\text{RSM}}(\bm{S}_1,\bm{S}_2) = \psi(\bm{C_{q}}(\bm{P}(\bm{S}_1)), \bm{C_{q}}(\bm{P}(\bm{S}_2)))\,.
  \end{align}
\end{restatable}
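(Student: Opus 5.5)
The plan is to prove two things in turn. First, that $\bm{P}(\bm{S})$ is row-stochastic with nonnegative entries. Second, that applying $\bm{C_{q}}$ to $\bm{P}(\bm{S})$ returns $\alpha(\bm{S})\bm{C_{q}}(\bm{S})$. Scale invariance of $\psi$ then closes the argument.

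Two properties of $\bm{C_{q}}$ are needed beyond those listed in~\eqref{eq:4}, namely linearity and idempotence. Linearity is implicit in calling $\bm{C_{q}}$ a centering operator (it is the affine-embedding setting the paper describes), and I would state it explicitly as an assumption. Idempotence, $\bm{C_{q}}^2=\bm{C_{q}}$, is given.

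\textbf{Row sums.} Using $\bm{C_{q}}(\bm{S})\mathbf{1}=\bm{0}$ and $\bm{q}^\top\mathbf{1}=1$, we get
\[
\bm{P}(\bm{S})\mathbf{1}=\mathbf{1}(\bm{q}^\top\mathbf{1})+\alpha\,\bm{C_{q}}(\bm{S})\mathbf{1}=\mathbf{1}.
\]

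\textbf{Nonnegativity.} Write $\bm{M}=\bm{C_{q}}(\bm{S})$. If $M_{ij}=0$, then $P_{ij}=q_j>0$. Otherwise the definition of $\alpha$ as a minimum gives $\alpha\le q_j/|M_{ij}|$, so $\alpha|M_{ij}|\le q_j$ and hence $P_{ij}=q_j+\alpha M_{ij}\ge q_j-\alpha|M_{ij}|\ge 0$. Since every $q_j>0$, the set of nonzero entries yields $\alpha>0$ when $\bm{M}\neq\bm{0}$.

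\textbf{The degenerate case.} If $\bm{C_{q}}(\bm{S})=\bm{0}$, the minimum is over an empty set. I would adopt a convention, say $\alpha=1$ or any positive value. Then $\bm{P}(\bm{S})=\mathbf{1}\bm{q}^\top$, which is still Markov. In this case both sides of the claimed identity are $\psi$ evaluated with a zero argument, so they agree trivially (or the case is simply excluded, as such measures are undefined there).

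\textbf{Centering identity.} By linearity, the property $\bm{C_{q}}(\mathbf{1}\bm{q}^\top)=\bm{0}$, and idempotence,
\[
\bm{C_{q}}(\bm{P}(\bm{S}))=\bm{C_{q}}(\mathbf{1}\bm{q}^\top)+\alpha\,\bm{C_{q}}(\bm{C_{q}}(\bm{S}))=\alpha(\bm{S})\,\bm{C_{q}}(\bm{S}).
\]
Applying scale invariance with $a=\alpha(\bm{S}_1)>0$ and $b=\alpha(\bm{S}_2)>0$ gives
\[
\psi(\bm{C_{q}}(\bm{P}(\bm{S}_1)),\bm{C_{q}}(\bm{P}(\bm{S}_2)))=\psi(\bm{C_{q}}(\bm{S}_1),\bm{C_{q}}(\bm{S}_2))=m_{\text{RSM}}(\bm{S}_1,\bm{S}_2).
\]

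\textbf{Main obstacle.} The only real subtlety is that linearity of $\bm{C_{q}}$ is not among the stated axioms~\eqref{eq:4}. Without it, $\bm{C_{q}}(\mathbf{1}\bm{q}^\top+\alpha\bm{M})$ cannot be split. I would handle this by reading "centering operator" as a linear map; the concrete instance is $\bm{C_{q}}(\bm{M})=\bm{M}(\bm{I}-\mathbf{1}\bm{q}^\top)$ or its two-sided version, as for HSIC and DistCorr. The positivity of $\alpha$ and the empty-minimum edge case are the other points to treat carefully.
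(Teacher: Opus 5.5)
Your proposal is correct and matches the paper's proof step for step. Nonnegativity follows from the definition of $\alpha(\bm{S})$ as a minimum, unit row sums from $\bm{C_{q}}(\bm{S})\mathbf{1}=\bm{0}$ and $\bm{q}^\top\mathbf{1}=1$, and $\bm{C_{q}}(\bm{P}(\bm{S}))=\alpha(\bm{S})\bm{C_{q}}(\bm{S})$ from linearity, $\bm{C_{q}}(\mathbf{1}\bm{q}^\top)=\bm{0}$ and idempotence, after which scale invariance of $\psi$ finishes; the case $\bm{C_{q}}(\bm{S})=\bm{0}$ is handled, as in the paper, by taking any positive $\alpha$. Your remark about linearity is well taken: the paper's proof likewise simply invokes ``linearity of $\bm{C_{q}}$'' even though it is not among the conditions imposed on $\bm{C_{q}}$ in the statement, so it is an implicit assumption there too.
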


\emph{Proof idea:} Adding the rank-one Markov matrix $\mathbf{1}\bm{q}^\top$ to a sufficiently rescaled centered RSM produces a row-stochastic matrix: the rows of $\mathbf{1}\bm{q}^\top$ sum to one, $\bm{C_{q}}(\bm{S})$ has zero row sums by~(\ref{eq:4}), and $\alpha(\bm{S})$ is the largest positive scalar that keeps every entry of $\bm{P}(\bm{S})$ nonnegative. Centering this Markov matrix removes the additive shift, again by~(\ref{eq:4}), and leaves the centered RSM up to the positive factor $\alpha(\bm{S})$. The scale invariance of $\psi$ then absorbs this factor, recovering $m_{\text{RSM}}(\bm{S}_1, \bm{S}_2)$.

For the full proof, see Appendix~\ref{sec:theorem1proof}

The CKA and DistCorr similarity measures we will analyze in the following both rely on the same centering operator, namely double-sided $\bm{q}$-weighted centering. To link them to Theorem~\ref{thm:markovablemeasures}, we now verify that this operator satisfies the conditions stated there.

\begin{restatable}[]{lemma}{centeringoplemma}
  \label{lem:centeringop}
  The double-sided $\bm{q}$-weighted centering operator $\bm{C_{q}} : \mathbb{R}^{N \times N} \to \mathbb{R}^{N \times N}$ defined by
  \begin{align}
    \bm{C_{q}}(\bm{M})=(\bm{I}-\bm{1q}^{\top})\bm{M}(\bm{I}-\bm{1q}^{\top})
  \end{align}
  for a probability vector $\bm{q}\in\mathbb{R}^{N}$ satisfies the conditions of Theorem~\ref{thm:markovablemeasures} in~(\ref{eq:4}). For $\bm{q}=\frac{1}{N}\bm{1}$, we have $\bm{C_{\frac{1}{N}\bm{1}}}(\bm{M})=\bm{H}\bm{MH}$, with the column-wise centering matrix $\bm{H}$ defined in~(\ref{eq:2}).
\end{restatable}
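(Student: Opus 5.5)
The plan is to verify the three conditions in~(\ref{eq:4}) by direct matrix algebra. The only tool needed is the projector $\bm{G}:=\bm{I}-\bm{1}\bm{q}^\top$. First I will record its basic identities, all of which follow from $\bm{q}^\top\bm{1}=1$ (since $\bm{q}$ is a probability vector):
\begin{align}
  \bm{G}\bm{1} = \bm{1}-\bm{1}(\bm{q}^\top\bm{1}) = \bm{0},\qquad
  \bm{q}^\top\bm{G} = \bm{q}^\top-(\bm{q}^\top\bm{1})\bm{q}^\top = \bm{0}^\top,\qquad
  \bm{G}^2 = \bm{I}-2\bm{1}\bm{q}^\top+\bm{1}(\bm{q}^\top\bm{1})\bm{q}^\top = \bm{G}.
\end{align}
So $\bm{G}$ is an (in general oblique) idempotent that annihilates $\bm{1}$ from the right and $\bm{q}^\top$ from the left.

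With these identities, each condition takes one line.
\begin{itemize}
  \item \emph{Idempotence.} $\bm{C_{q}}^2(\bm{M})=\bm{G}(\bm{G}\bm{M}\bm{G})\bm{G}=\bm{G}^2\bm{M}\bm{G}^2=\bm{G}\bm{M}\bm{G}=\bm{C_{q}}(\bm{M})$, using $\bm{G}^2=\bm{G}$.
  \item \emph{Zero row sums.} $\bm{C_{q}}(\bm{M})\bm{1}=\bm{G}\bm{M}(\bm{G}\bm{1})=\bm{0}$, using $\bm{G}\bm{1}=\bm{0}$.
  \item \emph{Annihilation of the rank-one Markov matrix.} $\bm{C_{q}}(\bm{1}\bm{q}^\top)=(\bm{G}\bm{1})(\bm{q}^\top\bm{G})=\bm{0}$. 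Either factor vanishing already suffices.
\end{itemize}
Linearity of $\bm{C_{q}}$ is immediate, so it is a well-defined linear operator on $\mathbb{R}^{N\times N}$. Positivity of the components of $\bm{q}$ is not needed for~(\ref{eq:4}); it matters only in Theorem~\ref{thm:markovablemeasures}, through $\alpha$.

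For the special case $\bm{q}=\frac{1}{N}\bm{1}$, I will substitute and note that $\bm{G}=\bm{I}-\frac{1}{N}\bm{1}\bm{1}^\top=\bm{H}$ exactly, by~(\ref{eq:2}). Hence $\bm{C_{\frac1N\bm{1}}}(\bm{M})=\bm{H}\bm{M}\bm{H}$.

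No step is a genuine obstacle. The only point requiring care is that $\bm{G}$ is not symmetric for general $\bm{q}$. I must therefore use the left identity $\bm{q}^\top\bm{G}=\bm{0}^\top$ and the right identity $\bm{G}\bm{1}=\bm{0}$ on the correct sides, and not, for example, claim $\bm{1}^\top\bm{G}=\bm{0}^\top$, which fails in general. This asymmetry is also why the condition in~(\ref{eq:4}) is stated for row sums, $\bm{C_{q}}(\bm{M})\bm{1}=\bm{0}$, which is exactly what the right factor $\bm{G}$ provides.
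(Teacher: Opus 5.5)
Your proof is correct and follows essentially the same route as the paper, which also introduces $\bm{E_q}=\bm{I}-\mathbf{1}\bm{q}^\top$, derives $\bm{E_q}\mathbf{1}=\bm{0}$, $\bm{q}^\top\bm{E_q}=\bm{0}^\top$ and $\bm{E_q}^2=\bm{E_q}$, and then checks the three conditions in~(\ref{eq:4}) and the specialization to $\bm{H}$ line by line. Your side remarks are also correct: positivity of $\bm{q}$ is not needed for~(\ref{eq:4}), and $\bm{G}$ is generally not symmetric.
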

\emph{Proof:} See Appendix~\ref{sec:theorem1proof}.

With Theorem~\ref{thm:markovablemeasures} and Lemma~\ref{lem:centeringop} in hand, the framework covers a broad family of RSM-based similarity measures. Linear and RBF CKA, distance correlation, and the Eigenspace Overlap Score all satisfy its assumptions, and several further measures fall in scope after minor preprocessing. In the rest of this subsection, we work out the Markov reformulation explicitly for CKA and DistCorr, the two measures used in our experiments.

\begin{restatable}[]{corollary}{ckacorollary}
  \label{cor:cka}
  The centered kernel alignment measure defined in~(\ref{eq:1}) can be written as
  \begin{align}
    m_{\text{CKA}}(\bm{S}_{1},\bm{S}_{2})=\psi(\bm{C_{q}}(\bm{P}(\bm{S}_1)), \bm{C_{q}}(\bm{P}(\bm{S}_2)))
  \end{align}
  with Markov matrices $\bm{P}(\bm{S}_1)$ and $\bm{P}(\bm{S}_2)$.
\end{restatable}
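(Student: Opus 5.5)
We apply Theorem~\ref{thm:markovablemeasures} with $\bm{q}=\frac{1}{N}\bm{1}$ and the double-sided centering operator of Lemma~\ref{lem:centeringop}. The task then reduces to exhibiting a function $\psi$ that is invariant under separate positive rescalings and satisfies $m_{\text{CKA}}(\bm{S}_1,\bm{S}_2)=\psi(\bm{H}\bm{S}_1\bm{H},\bm{H}\bm{S}_2\bm{H})$. The natural candidate is
\begin{align}
  \psi(\bm{A},\bm{B})=\frac{\text{tr}(\bm{A}\bm{B})}{\sqrt{\text{tr}(\bm{A}\bm{A})\,\text{tr}(\bm{B}\bm{B})}}\,,
\end{align}
where the prefactors $\frac{1}{(N-1)^2}$ from~(\ref{eq:2}) cancel between numerator and denominator.

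\textbf{Step 1: express HSIC through the centered matrices.} We check that $\text{tr}(\bm{S}_1\bm{H}\bm{S}_2\bm{H})=\text{tr}\bigl((\bm{H}\bm{S}_1\bm{H})(\bm{H}\bm{S}_2\bm{H})\bigr)$. This follows from $\bm{H}^2=\bm{H}$, which is the idempotency in~(\ref{eq:4}) specialised via Lemma~\ref{lem:centeringop}, together with cyclicity of the trace:
\begin{align}
  \text{tr}(\bm{H}\bm{S}_1\bm{H}\bm{H}\bm{S}_2\bm{H})=\text{tr}(\bm{H}^2\bm{S}_1\bm{H}\bm{S}_2)=\text{tr}(\bm{S}_1\bm{H}\bm{S}_2\bm{H})\,.
\end{align}
Applying the same identity to each HSIC term in~(\ref{eq:1}) gives $m_{\text{CKA}}(\bm{S}_1,\bm{S}_2)=\psi(\bm{C}_{\frac{1}{N}\bm{1}}(\bm{S}_1),\bm{C}_{\frac{1}{N}\bm{1}}(\bm{S}_2))$.

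\textbf{Step 2: scale invariance.} For $a,b>0$, the numerator of $\psi(a\bm{A},b\bm{B})$ scales by $ab$, while the denominator becomes $\sqrt{a^2b^2\,\text{tr}(\bm{A}^2)\text{tr}(\bm{B}^2)}=ab\sqrt{\text{tr}(\bm{A}^2)\text{tr}(\bm{B}^2)}$. Hence $\psi(a\bm{A},b\bm{B})=\psi(\bm{A},\bm{B})$. Note that positivity of $a,b$ is what lets $ab$ be pulled out of the square root without a sign change. With this, Theorem~\ref{thm:markovablemeasures} applies directly: it yields that $\bm{P}(\bm{S}_1)$ and $\bm{P}(\bm{S}_2)$ are Markov matrices and gives the claimed identity.

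\textbf{Main obstacle: well-definedness.} The only delicate point is degeneracy. If $\bm{H}\bm{S}\bm{H}=\bm{0}$, then $\alpha(\bm{S})$ is a minimum over an empty set and CKA itself is undefined, since the denominator in~(\ref{eq:1}) vanishes. We therefore adopt the standing convention that CKA is only considered for nondegenerate centered RSMs. For RSMs, which are symmetric, $\text{tr}(\bm{A}\bm{A})=\|\bm{A}\|_F^2>0$ whenever $\bm{A}\neq\bm{0}$, so $\psi$ is well defined on all arguments that arise. In particular, Theorem~\ref{thm:markovablemeasures} gives $\bm{C_q}(\bm{P}(\bm{S}))=\alpha(\bm{S})\bm{C_q}(\bm{S})$ with $\alpha(\bm{S})>0$, so these arguments stay symmetric and nonzero.
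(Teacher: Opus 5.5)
Your proof is correct and follows the paper's argument. Both show, via $\bm{H}^2=\bm{H}$ and cyclicity of the trace, that the HSIC terms are unchanged under $\bm{S}\mapsto\bm{H}\bm{S}\bm{H}$; both check that the normalized ratio is invariant under separate positive rescalings; and both invoke Theorem~\ref{thm:markovablemeasures} together with Lemma~\ref{lem:centeringop} at $\bm{q}=\frac{1}{N}\bm{1}$. The one difference is cosmetic. The paper takes $\psi$ to be the HSIC-normalized ratio itself, so $\psi(\bm{S}_1,\bm{S}_2)=\psi(\bm{H}\bm{S}_1\bm{H},\bm{H}\bm{S}_2\bm{H})$ holds for arbitrary inputs, which is what later lets the outer centering be absorbed in MS-CKA and AD-CKA. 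Your trace-only $\psi$ agrees with it on centered arguments, which is all this corollary needs.
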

\emph{Proof:} By direct computation, see Appendix~\ref{sec:theorem1proof}.

Note that Theorem~\ref{thm:markovablemeasures} is constructive: given the RSMs $\bm{S}_1$ and $\bm{S}_2$, the Markov matrices $\bm{P}(\bm{S}_1)$ and $\bm{P}(\bm{S}_2)$ are obtained in closed form, so the corollary provides an explicit recipe for computing CKA through Markov matrices.

\begin{restatable}[]{corollary}{distcorrcorollary}
  \label{cor:distcorr}
  The distance correlation measure defined above~(\ref{eq:6}) can be written as
  \begin{align}
    m_{\text{DistCorr}}(\bm{S}_{1},\bm{S}_{2})=\psi(\bm{C_{q}}(\bm{P}(\bm{S}_1)), \bm{C_{q}}(\bm{P}(\bm{S}_2)))
  \end{align}
  with Markov matrices $\bm{P}(\bm{S}_1)$ and $\bm{P}(\bm{S}_2)$.
\end{restatable}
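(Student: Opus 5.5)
The plan is to apply Theorem~\ref{thm:markovablemeasures} with the double-sided centering operator of Lemma~\ref{lem:centeringop}, taking $\bm{q}=\frac{1}{N}\bm{1}$. The remaining task is to exhibit a scale-invariant $\psi$ with $m_{\text{DistCorr}}(\bm{S}_1,\bm{S}_2)=\psi(\bm{C_q}(\bm{S}_1),\bm{C_q}(\bm{S}_2))$. Lemma~\ref{lem:centeringop} gives $\bm{C_{\frac{1}{N}\bm{1}}}(\bm{M})=\bm{H}\bm{M}\bm{H}$, which is the double (row and column) mean-centering assumed in the definition of distance covariance above~(\ref{eq:6}). For general, possibly uncentered, RSMs I read DistCorr as being evaluated on $\bm{H}\bm{S}\bm{H}$; for RSMs that are already doubly centered, $\bm{H}\bm{S}\bm{H}=\bm{S}$ anyway. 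Thus $m_{\text{DistCorr}}(\bm{S}_1,\bm{S}_2)=\psi(\bm{H}\bm{S}_1\bm{H},\bm{H}\bm{S}_2\bm{H})$, with
\begin{align}
  \psi(\bm{A},\bm{B})=\frac{\sum_{i,j}A_{ij}B_{ij}}{\sqrt{\sum_{i,j}A_{ij}^2\,\sum_{i,j}B_{ij}^2}}\,.
\end{align}
The factors $1/N^2$ in $\dcov^2$ cancel between numerator and denominator.

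Next I check scale invariance. For $a,b>0$, the numerator of $\psi(a\bm{A},b\bm{B})$ scales by $ab$. The denominator scales by $\sqrt{a^2b^2}=ab$, which uses $a,b>0$. Hence $\psi(a\bm{A},b\bm{B})=\psi(\bm{A},\bm{B})$. This is just the Frobenius cosine similarity $\langle\bm{A},\bm{B}\rangle_F/(\|\bm{A}\|_F\|\bm{B}\|_F)$.

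With these ingredients, Theorem~\ref{thm:markovablemeasures} gives that $\bm{P}(\bm{S}_1)$ and $\bm{P}(\bm{S}_2)$ are Markov matrices and that $m_{\text{DistCorr}}(\bm{S}_1,\bm{S}_2)=\psi(\bm{C_q}(\bm{P}(\bm{S}_1)),\bm{C_q}(\bm{P}(\bm{S}_2)))$, as claimed.

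The main obstacle is not algebraic but about the domain. If some $\bm{C_q}(\bm{S}_k)=\bm{0}$, then $\alpha(\bm{S}_k)$ is a minimum over an empty set and $\psi$ is $0/0$. I will note that DistCorr is undefined in exactly the same degenerate cases, so the identity holds wherever either side is defined. The corollary then reduces to the CKA case with HSIC's trace replaced by the Frobenius inner product. Indeed, $\operatorname{tr}(\bm{S}_1\bm{H}\bm{S}_2\bm{H})=\langle \bm{H}\bm{S}_1\bm{H},\bm{H}\bm{S}_2\bm{H}\rangle_F$ for symmetric $\bm{S}_k$, using $\bm{H}^2=\bm{H}$. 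So the same $\psi$ also serves for Corollary~\ref{cor:cka}, and the two corollaries share one argument.
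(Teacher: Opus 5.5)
Your proposal is correct and follows essentially the same route as the paper. You identify $\bm{C_q}$ with $\bm{H}\,\cdot\,\bm{H}$ via Lemma~\ref{lem:centeringop}, take $\psi$ to be the Frobenius cosine similarity (the $1/N^2$ factors cancel), check invariance under positive rescalings, and invoke Theorem~\ref{thm:markovablemeasures}; your side remarks on the degenerate case $\bm{C_q}(\bm{S}_k)=\bm{0}$ and on reusing the same $\psi$ for CKA with symmetric RSMs are correct but not needed.
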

\emph{Proof:} By direct computation, see Appendix~\ref{sec:theorem1proof}.


\subsection{Multi-Scale Diffusion Similarity Measures}
\label{subsec:multiscale}
With CKA and DistCorr now expressed in terms of Markov matrices, we can manipulate these operators to probe the geometry of the data distribution beyond the direct pairwise affinities encoded in the original RSMs. The simplest such manipulation is to take powers of the Markov matrix.

Higher powers $\bm{P}(\bm{S})^t$ describe $t$-step transitions of the random walk induced by $\bm{P}(\bm{S})$: the entry $(\bm{P}(\bm{S})^t)_{ij}$ aggregates all length-$t$ paths from sample $i$ to sample $j$ and thereby probes the data geometry at scale $t$, the viewpoint underlying diffusion-based manifold learning~\cite{coifman2006diffusion}. Since $\bm{P}(\bm{S})^t$ remains row-stochastic, it is a valid input to the $\bm{q}$-weighted centering operator and the similarity functions $\psi$ of Theorem~\ref{thm:markovablemeasures}. In this respect, multi-scale measures are conceptually related to topology-based similarity measures~\cite{barannikov2022rtd,perez2021persistent,tsitsulin2020shape}, which probe higher-order neighborhood structure.

Concretely, using $\bm{P}(\bm{S})^t$ in Corollaries~\ref{cor:cka} and~\ref{cor:distcorr} yields multi-scale variants of CKA and DistCorr.

\begin{definition}[Multi-Scale CKA]
  \label{def:mscka}
  For RSMs $\bm{S}_1, \bm{S}_2 \in \mathbb{R}^{N\times N}$ and an integer $t\geq 1$, the \emph{Multi-Scale CKA} similarity measure is defined as
  \begin{align}
    m_{\text{MS-CKA}}^{(t)}(\bm{S}_1, \bm{S}_2)
    = \frac{\hsic(\bm{P}(\bm{S}_1)^{t}, \bm{P}(\bm{S}_2)^{t})}
           {\sqrt{\hsic(\bm{P}(\bm{S}_1)^{t}, \bm{P}(\bm{S}_1)^{t})\,\hsic(\bm{P}(\bm{S}_2)^{t}, \bm{P}(\bm{S}_2)^{t})}}\,,
  \end{align}
  where the outer $\bm{q}$-weighted centering of Theorem~\ref{thm:markovablemeasures} has been absorbed by the $\bm{H}$-centering already built into $\hsic$ (cf.~the proof of Corollary~\ref{cor:cka}). For the uniform choice $\bm{q}=\frac{1}{N}\bm{1}$ used by CKA, $\bm{C_{q}}(\bm{M})=\bm{H}\bm{M}\bm{H}$ and $\bm{P}(\bm{S})$ takes the closed form
  \begin{align}
    \bm{P}(\bm{S}) = \tfrac{1}{N}\bm{1}\bm{1}^{\top} + \alpha(\bm{S})\,\bm{H}\bm{S}\bm{H}\,,
    \qquad
    \alpha(\bm{S}) = \min_{(\bm{H}\bm{S}\bm{H})_{ij}\neq 0}\frac{1}{N\,|(\bm{H}\bm{S}\bm{H})_{ij}|}\,,
  \end{align}
  so that the RHS of $m_{\text{MS-CKA}}^{(t)}$ is directly computable from $\bm{S}_1$ and $\bm{S}_2$.
\end{definition}

\begin{definition}[Multi-Scale DistCorr]
  \label{def:msdistcorr}
  For RSMs $\bm{S}_1, \bm{S}_2 \in \mathbb{R}^{N\times N}$ and an integer $t\geq 1$, the \emph{Multi-Scale DistCorr} similarity measure is defined as
  \begin{align}
    m_{\text{MS-DistCorr}}^{(t)}(\bm{S}_1, \bm{S}_2)
    = \frac{\langle \bm{H}\bm{P}(\bm{S}_1)^{t}\bm{H},\, \bm{H}\bm{P}(\bm{S}_2)^{t}\bm{H}\rangle_F}
           {\|\bm{H}\bm{P}(\bm{S}_1)^{t}\bm{H}\|_F\,\|\bm{H}\bm{P}(\bm{S}_2)^{t}\bm{H}\|_F}\,,
  \end{align}
  where the outer $\bm{q}$-weighted centering of Theorem~\ref{thm:markovablemeasures} has been written explicitly via the centering matrix $\bm{H}$ (cf.~the proof of Corollary~\ref{cor:distcorr}). For the uniform choice $\bm{q}=\frac{1}{N}\bm{1}$, $\bm{C_{q}}(\bm{M})=\bm{H}\bm{M}\bm{H}$ and $\bm{P}(\bm{S})$ takes the closed form
  \begin{align}
    \bm{P}(\bm{S}) = \tfrac{1}{N}\bm{1}\bm{1}^{\top} + \alpha(\bm{S})\,\bm{H}\bm{S}\bm{H}\,,
    \qquad
    \alpha(\bm{S}) = \min_{(\bm{H}\bm{S}\bm{H})_{ij}\neq 0}\frac{1}{N\,|(\bm{H}\bm{S}\bm{H})_{ij}|}\,,
  \end{align}
  so that the RHS of $m_{\text{MS-DistCorr}}^{(t)}$ is directly computable from $\bm{S}_1$ and $\bm{S}_2$.
\end{definition}

For $t=1$, both definitions reduce to standard CKA and DistCorr by Corollaries~\ref{cor:cka} and~\ref{cor:distcorr}. Larger values of $t$ progressively shift the comparison from direct pairwise affinities to multi-scale neighborhood structure, providing a tunable means of probing the geometry of the underlying data manifold at different scales.

\subsection{Alternating-Diffusion Similarity Measures}
\label{subsec:multirep}
Section~\ref{subsec:multiscale} expanded the input space of CKA and DistCorr through powers of a single Markov matrix. The multi-view machinery from Section~\ref{subsec:ad} suggests a complementary direction: rather than processing each representation in isolation, alternating diffusion lets us fuse \emph{several} Markov matrices into a single operator that captures their shared geometry. For neural networks, the natural source of multiple aligned views is the layer stack: applying AD across all layers of a network yields a representation of the network as a whole, and comparing two networks then compares these joint operators rather than two isolated layers. This shifts the representational similarity from layer-to-layer to network-to-network and exposes structure that no single layer carries on its own.

\begin{definition}[Multi-representation similarity measure]
  \label{def:multirepsim}
  Let $\mathcal{A} = \{\bm{R}_i \in \mathbb{R}^{N \times D_i}\}_{i=1}^n$ denote a set of $n$ sample-wise aligned representations, i.e.\ each $\bm{R}_i$ is computed from the same $N$ inputs but may live in a different ambient dimension $D_i$. Write $\mathfrak{A}$ for the collection of all such finite sets, where both $n$ and the $D_i$ may vary, and let $\mathcal{Y} \subset \mathbb{R}^{N\times N}$ denote the set of $N \times N$ row-stochastic matrices. Given a fusion map $\mathscr{F} : \mathfrak{A} \to \mathcal{Y}$ and a Markov-matrix similarity measure $m_{\bm{P}}$ as in Theorem~\ref{thm:markovablemeasures}, the \emph{multi-representation} similarity measure $m_{\text{MR}}$ between two such sets $\mathcal{A}_1, \mathcal{A}_2$ is defined as
  \begin{align}
    m_{\text{MR}}(\mathcal{A}_1, \mathcal{A}_2) = m_{\bm{P}}(\mathscr{F}(\mathcal{A}_1), \mathscr{F}(\mathcal{A}_2))\,.
  \end{align}
\end{definition}

Intuitively, $\mathscr{F}$ collapses an entire collection of aligned representations into a single Markov matrix that summarizes their joint sample geometry, and $m_{\bm{P}}$ then plays its usual role of comparing two such matrices. We instantiate $\mathscr{F}$ via alternating diffusion across the representations in the set, lifted to the Markov reformulation of Theorem~\ref{thm:markovablemeasures}.

\begin{definition}[Alternating-diffusion fusion map]
  \label{def:adfusion}
  For $\mathcal{A} = \{\bm{R}_i\}_{i=1}^n \in \mathfrak{A}$, let $\bm{S}_i$ denote the RSM of $\bm{R}_i$ and $\bm{P}(\bm{S}_i)$ the Markov matrix associated with $\bm{S}_i$ via Theorem~\ref{thm:markovablemeasures}. The \emph{alternating-diffusion fusion map} $\mathscr{F}_{\text{AD}} : \mathfrak{A} \to \mathcal{Y}$ is defined as
  \begin{align}
    \mathscr{F}_{\text{AD}}(\mathcal{A}) = \bm{P}(\bm{S}_n)\,\bm{P}(\bm{S}_{n-1})\cdots \bm{P}(\bm{S}_1)\,.
  \end{align}
\end{definition}
The product is row-stochastic because each factor is, so $\mathscr{F}_{\text{AD}}(\mathcal{A}) \in \mathcal{Y}$. For $n = 2$, $\mathscr{F}_{\text{AD}}$ recovers the alternating-diffusion operator of Section~\ref{subsec:ad}, applied to the Markov reformulations $\bm{P}(\bm{S}_1), \bm{P}(\bm{S}_2)$ of the two RSMs; for $n > 2$, it extends AD to an arbitrary number of views, in line with~\cite{lederman2018alternating}. The product is non-commutative, but the resulting operators share the same non-zero spectrum across orderings and induce equivalent diffusion geometries on the common variable~\cite{lederman2018alternating}, so the choice of ordering does not affect the analysis below.

\paragraph{Assumptions of alternating diffusion.}
Recall from Section~\ref{subsec:ad} that the AD theory assumes a hidden common variable $X$ and view-specific nuisance variables that are conditionally independent given $X$. In the neural-representation setting, we identify $X$ with the latent semantic content shared across the network and the view-specific factors with the layer-specific transformations of the forward pass. Therefore, strict conditional independence is an idealization in our setting, since consecutive layers are deterministic functions of one another and therefore strongly correlated. We will for some tasks select layers with some spacing along the network rather than adjacent ones, so that AD captures features of the representation as a whole rather than features specific to a single layer. For a remark about the assumption of local kernels in AD, see Appendix~\ref{app:local-nonlocal-kernel}.

\paragraph{AD-CKA and AD-DistCorr.}
Combining the AD fusion map with the Markov reformulations of CKA (Corollary~\ref{cor:cka}) and DistCorr (Corollary~\ref{cor:distcorr}) yields multi-representation analogues of both measures.

\begin{definition}[AD-CKA]
  \label{def:adcka}
  For two sets of sample-wise aligned representations $\mathcal{A}_1, \mathcal{A}_2 \in \mathfrak{A}$, the \emph{AD-CKA} similarity measure is defined as
  \begin{align}
    m_{\text{AD-CKA}}(\mathcal{A}_1, \mathcal{A}_2)
    = \frac{\hsic(\mathscr{F}_{\text{AD}}(\mathcal{A}_1),\, \mathscr{F}_{\text{AD}}(\mathcal{A}_2))}
           {\sqrt{\hsic(\mathscr{F}_{\text{AD}}(\mathcal{A}_1),\, \mathscr{F}_{\text{AD}}(\mathcal{A}_1))\,\hsic(\mathscr{F}_{\text{AD}}(\mathcal{A}_2),\, \mathscr{F}_{\text{AD}}(\mathcal{A}_2))}}\,,
  \end{align}
  where the outer $\bm{q}$-weighted centering of Theorem~\ref{thm:markovablemeasures} has been absorbed into the $\bm{H}$-centering already built into $\hsic$ (cf.~the proof of Corollary~\ref{cor:cka}).
\end{definition}

\begin{definition}[AD-DistCorr]
  \label{def:addistcorr}
  For two sets of sample-wise aligned representations $\mathcal{A}_1, \mathcal{A}_2 \in \mathfrak{A}$, the \emph{AD-DistCorr} similarity measure is defined as
  \begin{align}
    m_{\text{AD-DistCorr}}(\mathcal{A}_1, \mathcal{A}_2)
    = \frac{\langle \bm{H}\mathscr{F}_{\text{AD}}(\mathcal{A}_1)\bm{H},\, \bm{H}\mathscr{F}_{\text{AD}}(\mathcal{A}_2)\bm{H}\rangle_F}
           {\|\bm{H}\mathscr{F}_{\text{AD}}(\mathcal{A}_1)\bm{H}\|_F\,\|\bm{H}\mathscr{F}_{\text{AD}}(\mathcal{A}_2)\bm{H}\|_F}\,,
  \end{align}
  where the outer $\bm{q}$-weighted centering of Theorem~\ref{thm:markovablemeasures} has been written explicitly via the centering matrix $\bm{H}$ (cf.~the proof of Corollary~\ref{cor:distcorr}).
\end{definition}

For singleton sets $\mathcal{A}_1 = \{\bm{R}_1\}, \mathcal{A}_2 = \{\bm{R}_2\}$, the alternating-diffusion product reduces to a single factor $\bm{P}(\bm{S}_k)$ and AD-CKA and AD-DistCorr collapse to the standard CKA and DistCorr of Corollaries~\ref{cor:cka} and~\ref{cor:distcorr}.


\section{Experiments}
\label{sec:experiments}

\begin{table*}[t]
\centering
\scriptsize
\setlength{\tabcolsep}{3.5pt}
\renewcommand{\arraystretch}{1.05}
\caption{Spearman correlations on the ReSi prediction-grounded tests, comparing various baselines with the proposed diffusion-based measures. Best performance in bold.}
\label{tab:final-experiments-side-by-side}
\resizebox{\textwidth}{!}{%
\begin{tabular}{llrrrr@{\hspace{1.4em}}rrrr}
\toprule
 \multicolumn{2}{c}{Dataset} & \multicolumn{4}{c}{SST-2} & \multicolumn{4}{c}{ImageNet100} \\
 \multicolumn{2}{c}{Correlation to difference} & \multicolumn{2}{c}{Accuracy $\uparrow$} & \multicolumn{2}{c}{JSD (Output) $\uparrow$} & \multicolumn{4}{c}{Accuracy $\uparrow$} \\
 \multicolumn{2}{c}{Architecture} & BERT & SmolLM2 & BERT & SmolLM2 & RNet18 & RNet101 & VGG11 & VGG19 \\
\midrule
CCA & PWCCA~\cite{morcos2018insights} & -0.33 & -- & -0.32 & -- & -0.02 & -0.09 & -0.18 & 0.11 \\
 & SVCCA~\cite{raghu2017svcca} & -0.08 & 0.40 & 0.47 & 0.49 & 0.29 & -0.00 & -0.04 & -0.30 \\
Alignment & AlignCos~\cite{hamilton2016diachronic} & 0.02 & 0.44 & 0.49 & 0.39 & -0.08 & -0.01 & -0.13 & -0.12 \\
 & AngShape~\cite{williams2021shape} & -0.14 & 0.46 & 0.40 & 0.36 & 0.21 & 0.15 & -0.02 & 0.03 \\
 & HardCorr~\cite{li2015convergent} & -0.33 & 0.53 & -0.01 & 0.33 & 0.21 & -0.01 & -0.01 & -0.03 \\
 & LinReg~\cite{kornblith2019similarity} & -0.38 & 0.03 & 0.02 & 0.34 & 0.19 & 0.09 & -0.04 & 0.09 \\
 & OrthProc~\cite{ding2021grounding} & -0.14 & 0.46 & 0.40 & 0.36 & 0.21 & 0.15 & -0.02 & 0.03 \\
 & PermProc~\cite{williams2021shape} & -0.09 & 0.47 & 0.04 & 0.16 & 0.07 & 0.08 & 0.14 & -0.02 \\
 & ProcDist~\cite{williams2021shape} & 0.10 & 0.57 & 0.49 & 0.36 & 0.08 & 0.14 & 0.13 & 0.08 \\
 & SoftCorr~\cite{li2015convergent} & -0.33 & 0.55 & -0.01 & 0.37 & 0.27 & 0.04 & -0.03 & -0.10 \\
Neighbors & 2nd-Cos~\cite{hamilton2016cultural} & 0.30 & 0.10 & 0.49 & 0.18 & -0.08 & 0.05 & -0.20 & -0.18 \\
 & Jaccard~\cite{wang2022instability} & 0.17 & 0.31 & 0.54 & 0.24 & -0.11 & -0.04 & -0.22 & 0.07 \\
 & RankSim~\cite{wang2022instability} & 0.14 & 0.18 & \textbf{0.56} & 0.15 & 0.07 & 0.13 & -0.01 & 0.03 \\
Topology & IMD~\cite{tsitsulin2020shape} & -0.06 & 0.31 & 0.02 & 0.23 & 0.17 & 0.12 & 0.08 & -0.17 \\
 & RTD~\cite{barannikov2022rtd} & 0.05 & 0.32 & 0.33 & 0.27 & 0.09 & -0.20 & -0.19 & 0.05 \\
Statistic & ConcDiff~\cite{wang2022instability} & 0.12 & 0.06 & -0.02 & -0.11 & -0.11 & -0.04 & -0.11 & -0.13 \\
 & MagDiff~\cite{wang2022instability} & -0.13 & -0.10 & 0.11 & -0.07 & -0.16 & -0.06 & -0.07 & -0.12 \\
 & UnifDiff~\cite{wang2020alignment} & 0.35 & -0.13 & 0.38 & 0.05 & -0.18 & -- & 0.17 & -0.04 \\
RSM & CKA~\cite{kornblith2019similarity} & -0.06 & 0.48 & 0.48 & 0.52 & 0.36 & 0.16 & 0.03 & -0.20 \\
 & DistCorr~\cite{szekely2007distance} & -0.10 & 0.49 & 0.51 & 0.57 & 0.31 & 0.08 & 0.03 & -0.21 \\
 & EOS~\cite{may2019downstream} & -0.38 & -0.27 & -0.21 & 0.06 & 0.05 & 0.11 & -0.22 & 0.08 \\
 & GULP~\cite{boixadsera2022gulp} & -0.36 & -- & -0.30 & -- & 0.02 & 0.12 & -0.17 & 0.10 \\
 & RSA~\cite{kriegeskorte2008rsa} & -0.23 & 0.39 & 0.44 & 0.26 & 0.06 & 0.09 & 0.24 & -0.35 \\
 & RSMDiff~\cite{yin2018dimensionality} & 0.20 & 0.36 & 0.24 & -0.03 & 0.09 & 0.11 & -0.04 & -0.08 \\
\midrule
Diffusion-based & Multi-Scale CKA & -0.06 & 0.46 & 0.39 & 0.51 & \textbf{0.38} & \textbf{0.18} & 0.01 & -0.21 \\
(Ours) & AD-CKA & \textbf{0.43} & \textbf{0.58} & 0.40 & 0.66 & 0.17 & 0.06 & 0.27 & 0.10 \\
 & Multi-Scale DistCorr & -0.12 & 0.49 & 0.41 & 0.57 & 0.31 & 0.06 & 0.02 & -0.23 \\
 & AD-DistCorr & 0.25 & 0.55 & 0.38 & \textbf{0.74} & 0.17 & 0.06 & \textbf{0.32} & \textbf{0.19} \\
\bottomrule
\end{tabular}%
}
\end{table*}

The performance of similarity measures is very task-dependent. Therefore, we extensively test the performance of our similarity measures on two public benchmarks for representational similarity: ReSi \cite{klabunde2025resi} and GRS \cite{ding2021grounding}. In all experiments, we use the released code, trained models, datasets, and evaluation protocols of the original benchmarks, and compare networks under our multi-layer similarity formulation. This yields direct comparisons to existing measures, while evaluating our method in the network-to-network setting it is designed for.

\subsection{ReSi (Representational Similarity) benchmark}
The ReSi benchmark evaluates similarity measures on 14 architectures, which were trained with 10 seeds each on 7 datasets spanning vision, language, and graph tasks. The benchmark results are summarized in Table~\ref{tab:final-experiments-side-by-side}, where we compare against 24 baseline measures, with further results in Appendix~\ref{app:resi-results}. For each evaluation setting, the similarities for all 45 model pairs are evaluated with each measure, and then rank-correlated with the corresponding target metric.

\subsubsection{Experimental setting}
Our analysis centers on the tests grounded by prediction, which assess whether representational similarity tracks functional differences between models. Specifically, Test~1 in the benchmark measures correlation to the difference in accuracy, and Test~2 measures correlation to the difference in output. In both these tests, the models being compared differ only in their training seeds. Further experimental details are deferred to Appendix~\ref{app:resi-setup}. For language, we focus on transformers, considering BERT base~\cite{devlin2019bert} (an encoder model) and SmolLM2~\cite{allal2025smollm2} (a larger decoder-only model) on the SST-2~\cite{socher2013sst} dataset. For vision, we use ResNet18, ResNet101~\cite{he2016resnet}, VGG11, and VGG19~\cite{simonyan2015vgg} on the ImageNet100~\cite{deng2009imagenet} dataset.

\subsubsection{Results and discussion}

\paragraph{Language transformers.}
In this setting, the preferred layer selection depends on the test. In Test~1, selecting layers broadly distributed across the network is most effective. Under this configuration, AD-CKA achieves the highest correlation among all measures in the ReSi benchmark for both BERT and SmolLM2, indicating that aggregating information across depth yields a substantially stronger network-level similarity signal for overall accuracy than standard single-layer measures.

In Test~2, the pattern changes: the best performance is obtained by concentrating on the deepest hidden layers rather than sampling broadly across the network. In this regime, AD-DistCorr performs particularly well and attains the best score among all ReSi measures for SmolLM2.

Taken together, these results show that the optimal layer selection depends on the functional criterion being grounded: broad depth coverage is most useful for accuracy, whereas the deepest layers are most informative for agreement at the level of sample-wise outputs. The consistency of these patterns across both language architectures suggests that they reflect a genuine property of language transformers, further supporting the value of multi-layer aggregation in this architecture class.

\paragraph{Vision models.}
In vision on Test~1, we observe a more architecture-dependent pattern. For ResNet18 and ResNet101, Multi-Scale CKA achieves the best result among all ReSi measures for both models. Unlike in language, the more involved multi-layer variants do not provide additional benefit here, suggesting that for residual architectures, an appropriately processed final representation may already capture most of the relevant similarity signal.

For the VGG architectures, the picture differs. On both VGG11 and VGG19, AD-DistCorr attains the best score among all baseline measures. Thus, while the strongest variant depends on the architecture family, the overall conclusion remains consistent: the proposed measures are highly competitive across vision models and outperform all benchmark baselines in several important settings.

\paragraph{Conclusions.}
1)~The proposed measures achieve state-of-the-art benchmark performance in several key settings across both language and vision. This indicates that multi-layer aggregation is especially valuable for tracking functional differences between models, and that the proposed framework captures network-level structure that is not accessible through standard single-layer comparisons. 2)~The most effective form of aggregation depends on the architectural regime: multi-layer aggregation is especially beneficial for language transformers, and also some vision models, whereas our simpler variants are often sufficient, and sometimes preferable, for residual vision architectures.

\subsection{GRS (Grounding Representational Similarity with statistical testing) benchmark}

GRS emphasizes out-of-distribution (OOD) behavior, and we focus on its Benchmark~4, which is naturally aligned with our goal of network-to-network comparison. For experimental details, see Appendix \ref{app:grs}.
Benchmark~4 is the hardest setting in GRS, varying both pretraining and fine-tuning seeds and targeting OOD accuracy on the Antonymy and Numerical stress tests~\cite{naik2018stress}.
\begin{wraptable}{r}{0.35\textwidth}
\vspace{-1em}
\centering
\scriptsize
\caption{Performance on GRS.}
\label{tab:language-small}
\begin{tabular}{lcc@{\hspace{0.8em}}cc}
\toprule
& \multicolumn{2}{c}{\shortstack{Antonym\\stress test}}
& \multicolumn{2}{c}{\shortstack{Numerical\\stress test}} \\
\cmidrule(lr){2-3}\cmidrule(lr){4-5}
Method & $\tau$ & $\rho$ & $\tau$ & $\rho$ \\
\midrule
Procrustes & 0.24 & 0.18 & 0.07 & 0.05 \\
CKA        & 0.23 & 0.16 & 0.12 & 0.08 \\
PWCCA      & 0.20 & 0.15 & 0.03 & 0.02 \\
AD-CKA     & \textbf{0.41} & \textbf{0.29} & \textbf{0.20} & \textbf{0.14} \\
\bottomrule
\end{tabular}
\vspace{-0.5em}
\end{wraptable}
This setting complements the prediction-grounded tests in ReSi. There, the main signal is in-distribution performance; here, it is OOD accuracy, which is particularly sensitive to initializations~\cite{mccoy-etal-2020-berts}.
Since this benchmark also uses a BERT model (medium), it further strengthens our broader language transformer narrative.

AD-CKA achieves the best results on both tasks by a comfortable margin, see Table~\ref{tab:language-small}. This is especially notable because \cite{ding2021grounding} explicitly presents Benchmark~4 as a challenge set on which no measure attains strong correlation, positioning it as a benchmark intended to spur further progress.

\section{Limitations}
\label{sec:lim}

ReSi found that no similarity measure performs well across the board, and ours are no exception, e.g., on graphs, our results are mixed (Appendix~\ref{app:resi-results}). Our multi-layer framework is limited in the number of layers that can be aggregated, as products of many Markov matrices tend toward the rank-1 stationary matrix (Appendix~\ref{app:resi-stability}). Extensions of AD, e.g.~\cite{katz2019alternating}, may help mitigate this.

\section*{Acknowledgments}

This work is supported by the Wallenberg AI, Autonomous Systems and Software Program (WASP) funded by the Knut and Alice Wallenberg Foundation. The computations were enabled by resources provided by the National Academic Infrastructure for Supercomputing in Sweden (NAISS), partially funded by the Swedish Research Council through grant agreement no. 2022-06725.

\newpage
\renewcommand*{\bibfont}{\normalfont\footnotesize}
\printbibliography


\newpage
\appendix

\section{Proofs}
\label{sec:theorem1proof}

In this appendix, we provide the proofs that were omitted in the main part of the paper. For convenience, we restate the statements to be proven here.

\markovablemeasures*

\begin{proof}

For $k\in\{1,2\}$, write $\bm{A}_k := \bm{C_q}(\bm{S}_k)$. By~(\ref{eq:4}), $\bm{A}_k\mathbf{1}=\bm{0}$.

\paragraph{$\bm{P}(\bm{S}_k)$ is nonnegative.}
The $(i,j)$-th entry of $\bm{P}(\bm{S}_k)$ is
\begin{align}
    (\bm{P}(\bm{S}_k))_{ij} = q_j + \alpha(\bm{S}_k)\,(\bm{A}_k)_{ij}\,.
\end{align}
Assume $\bm{A}_k\neq\bm{0}$ (otherwise $\bm{P}(\bm{S}_k)=\mathbf{1}\bm{q}^\top$, which is trivially row-stochastic, and the remainder of the proof follows with $\alpha(\bm{S}_k)$ chosen as any positive number). Since each $q_j>0$ and the minimum in the definition of $\alpha(\bm{S}_k)$ is taken over a nonempty set of strictly positive numbers, $\alpha(\bm{S}_k)>0$.

If $(\bm{A}_k)_{ij}\ge 0$, then $(\bm{P}(\bm{S}_k))_{ij}\ge q_j>0$. If $(\bm{A}_k)_{ij}<0$, then by definition of $\alpha(\bm{S}_k)$,
\begin{align}
    \alpha(\bm{S}_k)
    \le \frac{q_j}{|(\bm{A}_k)_{ij}|}
    = \frac{q_j}{-(\bm{A}_k)_{ij}}\,,
\end{align}
which rearranges to $q_j + \alpha(\bm{S}_k)(\bm{A}_k)_{ij}\ge 0$. Hence $\bm{P}(\bm{S}_k)\ge\bm{0}$ entrywise.

\paragraph{$\bm{P}(\bm{S}_k)$ is row-stochastic.}
Since $\bm{q}^\top\mathbf{1}=1$ and $\bm{A}_k\mathbf{1}=\bm{0}$ by~(\ref{eq:4}),
\begin{align}
    \bm{P}(\bm{S}_k)\mathbf{1}
    = \mathbf{1}(\bm{q}^\top\mathbf{1}) + \alpha(\bm{S}_k)\bm{A}_k\mathbf{1}
    = \mathbf{1}\,.
\end{align}
Combined with nonnegativity, this shows that $\bm{P}(\bm{S}_k)$ is a Markov matrix.

\paragraph{Markov reformulation of $m_{\text{RSM}}$.}
By linearity of $\bm{C_q}$,
\begin{align}
    \bm{C_q}(\bm{P}(\bm{S}_k)) = \bm{C_q}(\mathbf{1}\bm{q}^\top) + \alpha(\bm{S}_k)\,\bm{C_q}(\bm{A}_k)\,.
\end{align}
By~(\ref{eq:4}), $\bm{C_q}(\mathbf{1}\bm{q}^\top)=\bm{0}$ and
\begin{align}
    \bm{C_q}(\bm{A}_k) = \bm{C_q}(\bm{C_q}(\bm{S}_k)) = \bm{C_q}(\bm{S}_k) = \bm{A}_k\,.
\end{align}
Therefore,
\begin{align}
    \bm{C_q}(\bm{P}(\bm{S}_k)) = \alpha(\bm{S}_k)\,\bm{C_q}(\bm{S}_k)\,.
\end{align}
Since $\alpha(\bm{S}_1),\alpha(\bm{S}_2)>0$ and $\psi$ is invariant to separate positive rescalings,
\begin{align}
    \psi(\bm{C_q}(\bm{P}(\bm{S}_1)),\bm{C_q}(\bm{P}(\bm{S}_2)))
    &= \psi\bigl(\alpha(\bm{S}_1)\bm{C_q}(\bm{S}_1),\,\alpha(\bm{S}_2)\bm{C_q}(\bm{S}_2)\bigr) \\
    &= \psi(\bm{C_q}(\bm{S}_1),\bm{C_q}(\bm{S}_2))
    = m_{\text{RSM}}(\bm{S}_1,\bm{S}_2)\,.
\end{align}
\end{proof}

\centeringoplemma*
\begin{proof}
Set $\bm{E_{q}} := \bm{I} - \mathbf{1}\bm{q}^\top$, so that $\bm{C_q}(\bm{M}) = \bm{E_{q}}\bm{M}\bm{E_{q}}$. Since $\bm{q}$ is a probability vector, $\bm{q}^\top\mathbf{1} = 1$, hence
\begin{align}
    \bm{E_{q}}\mathbf{1} &= \mathbf{1} - \mathbf{1}\bm{q}^\top\mathbf{1} = \mathbf{1} - \mathbf{1} = \bm{0}\,, \\
    \bm{q}^\top\bm{E_{q}} &= \bm{q}^\top - \bm{q}^\top\mathbf{1}\bm{q}^\top = \bm{q}^\top - \bm{q}^\top = \bm{0}^\top\,.
\end{align}

\emph{Idempotence, $\bm{C_q}^2 = \bm{C_q}$.} Using $\bm{q}^\top\bm{E_{q}} = \bm{0}^\top$,
\begin{align}
    \bm{E_{q}}^2 = (\bm{I} - \mathbf{1}\bm{q}^\top)\bm{E_{q}} = \bm{E_{q}} - \mathbf{1}\bm{q}^\top\bm{E_{q}} = \bm{E_{q}}\,,
\end{align}
hence for any $\bm{M}\in\mathbb{R}^{N\times N}$,
\begin{align}
    \bm{C_q}(\bm{C_q}(\bm{M})) = \bm{E_{q}}(\bm{E_{q}}\bm{M}\bm{E_{q}})\bm{E_{q}} = \bm{E_{q}}^2\bm{M}\bm{E_{q}}^2 = \bm{E_{q}}\bm{M}\bm{E_{q}} = \bm{C_q}(\bm{M})\,.
\end{align}

\emph{Row-annihilation, $\bm{C_q}(\bm{M})\mathbf{1} = \bm{0}$ for any $\bm{M}\in\mathbb{R}^{N\times N}$.} Using $\bm{E_{q}}\mathbf{1}=\bm{0}$,
\begin{align}
    \bm{C_q}(\bm{M})\mathbf{1} = \bm{E_{q}}\bm{M}\bm{E_{q}}\mathbf{1} = \bm{E_{q}}\bm{M}\bm{0} = \bm{0}\,.
\end{align}

\emph{Annihilation of $\mathbf{1}\bm{q}^\top$, $\bm{C_q}(\mathbf{1}\bm{q}^\top) = \bm{0}$.} Using $\bm{E_{q}}\mathbf{1} = \bm{0}$,
\begin{align}
    \bm{C_q}(\mathbf{1}\bm{q}^\top) = \bm{E_{q}}\,\mathbf{1}\bm{q}^\top\bm{E_{q}} = \bm{0}\,\bm{q}^\top\bm{E_{q}} = \bm{0}\,.
\end{align}

Finally, for $\bm{q} = \frac{1}{N}\bm{1}$, we have $\bm{E_{q}} = \bm{I} - \frac{1}{N}\mathbf{1}\mathbf{1}^\top = \bm{H}$, so $\bm{C_q}(\bm{M}) = \bm{H}\bm{M}\bm{H}$.
\end{proof}

\ckacorollary*
\begin{proof}
  Let $\bm{C}(\bm{M}) = \bm{H}\bm{M}\bm{H}$. Then, the HSIC estimator~(\ref{eq:2}) satisfies
  \begin{align}
    \hsic(\bm{C}(\bm{S}_1), \bm{C}(\bm{S}_{2}))
    =\frac{\tr(\bm{H}\bm{S}_1\bm{H}\bm{H}\bm{H}\bm{S}_{2}\bm{H}\bm{H})}{(N-1)^2}
    =\frac{\tr(\bm{S}_1\bm{H}\bm{S}_{2}\bm{H})}{(N-1)^2}
    =\hsic(\bm{S}_1, \bm{S}_2)\,,
  \end{align}
  since $\bm{H}^{\top}=\bm{H}$ and $\bm{H}^{2}=\bm{H}$. With
  \begin{align}
    \psi(\bm{S}_1, \bm{S}_2) = \frac{\hsic(\bm{S}_1, \bm{S}_2)}{\sqrt{\hsic(\bm{S}_1, \bm{S}_1)\hsic(\bm{S}_2, \bm{S}_2)}}\,,
  \end{align}
  the CKA measure can be written as
  \begin{align}
    m_{\text{CKA}}(\bm{S}_{1},\bm{S}_{2})=\psi(\bm{S}_{1},\bm{S}_{2})=\psi(\bm{C}(\bm{S}_{1}),\bm{C}(\bm{S}_{2}))\,.
  \end{align}
  By Lemma~\ref{lem:centeringop}, $\bm{C}$ satisfies~(\ref{eq:4}). Since $\hsic(a\bm{A},b\bm{B})=ab\hsic(\bm{A},\bm{B})$, $\psi$ is invariant to rescalings. Therefore, the assumptions of Theorem~\ref{thm:markovablemeasures} are all satisfied.
\end{proof}

\distcorrcorollary*
\begin{proof}
  Let $\bm{C}(\bm{M}) = \bm{H}\bm{M}\bm{H}$. In terms of the Frobenius inner product $\langle \bm{A},\bm{B} \rangle_{F}=\tr(\bm{A}^{\top}\bm{B})$, the distance covariance~(\ref{eq:6}) for mean-centered RSMs can be written as
  \begin{align}
    \dcov^{2}(\bm{S}_1, \bm{S}_2)
    =\frac{1}{N^2}\langle\bm{S}_1, \bm{S}_2\rangle_F\,.
  \end{align}
  Therefore, for uncentered RSMs, we have
  \begin{align}
    \dcov^{2}(\bm{S}_1, \bm{S}_2)=\frac{1}{N^2}\langle\bm{C}(\bm{S}_1), \bm{C}(\bm{S}_2)\rangle_F\,.
  \end{align}
  With
  \begin{align}
    \psi(\bm{A}, \bm{B}) = \frac{\langle \bm{A}, \bm{B}\rangle_F}{\|\bm{A}\|_F\,\|\bm{B}\|_F}\,,
  \end{align}
  the distance correlation measure can be written as
  \begin{align}
    m_{\text{DistCorr}}(\bm{S}_{1},\bm{S}_{2})=\psi(\bm{C}(\bm{S}_{1}),\bm{C}(\bm{S}_{2}))\,.
  \end{align}
  By Lemma~\ref{lem:centeringop}, $\bm{C}$ satisfies~(\ref{eq:4}). Since $\psi$ is scale-invariant, the assumptions of Theorem~\ref{thm:markovablemeasures} are all satisfied.
\end{proof}


\section{Local vs.\ nonlocal Markov kernels}
\label{app:local-nonlocal-kernel}
The analysis of AD in~\cite{lederman2018alternating} is stated for \emph{local} Markov kernels, i.e.\ row-stochastic matrices derived from a Gaussian affinity with small bandwidth, so that a single diffusion step stays within a small neighborhood in observation space. Our Markov matrices are nonlocal in this sense: by Theorem~\ref{thm:markovablemeasures}, $\alpha(\bm{S})$ bounds the magnitude of $\alpha(\bm{S})\bm{C_{q}}(\bm{S})$ entrywise by $\bm{q}$, so $\bm{P}_{ij}\in[0,\,2q_j]$ fluctuates around the uniform background $q_j$ and a single step of the induced random walk is close to uniform rather than localized to a neighborhood, even when $\bm{S}$ itself decays with distance, as for RBF CKA. The proofs of the AD theorems, however, do not rely on locality, and the authors note that the assumption is made for simplicity and can be relaxed~\cite[Sec.~5]{lederman2018alternating}. Our empirical results (Section~\ref{sec:experiments}) can therefore be read as a validation that AD-style fusion remains useful for nonlocal Markov kernels.
\section{Further details and results on ReSi benchmark}
\label{app:resi}

\subsection{Evaluation setup}
\label{app:resi-setup}
ReSi evaluates representational similarity primarily on the final hidden layer representations, motivated by the fact that these representations feed directly into the classifier and therefore keep the prediction-grounded tests tied to the model’s observed behavior.

For Tests 1 and 2, ReSi grounds representational similarity in predictive behavior by considering a set $\mathcal{F}$ of models trained under the same architecture-dataset setting, differing only in training seed. Let $\mathbf{R}_f$ denote the representation of model $f \in \mathcal{F}$, let $m(\mathbf{R}_f,\mathbf{R}_{f'})$ be a similarity measure, and let $\mathbf{O}_f$ be the output probabilities of $f$ on a fixed evaluation set. ReSi evaluates whether representation similarity tracks differences in model predictions.

\paragraph{Test 1: Correlation to Accuracy Difference.}
For each model pair $(f,f')$, ReSi computes the absolute difference in
classification accuracy,
\begin{align}
\Delta_{\mathrm{Acc}}(f,f') =
\left|\mathrm{Acc}(\mathbf{O}_f,\mathbf{y}) -
\mathrm{Acc}(\mathbf{O}_{f'},\mathbf{y})\right|.
\end{align}
It then reports the Spearman rank correlation between the set of representational similarities $m(\mathbf{R}_f,\mathbf{R}_{f'})$ and the set of accuracy
differences $\Delta_{\mathrm{Acc}}(f,f')$ over all model pairs.

\paragraph{Test 2: Correlation to Output Difference.}
ReSi refines this idea by comparing instance-wise predictions. For each pair $(f,f')$, it computes both the disagreement of the hard predictions,
\begin{align}
\Delta_{\mathrm{Dis}}(f,f') &=
\frac{1}{n}\sum_{i=1}^{n}\mathbf{1}\!\left[
\arg\max \mathbf{O}_f^{(i)}\neq
\arg\max \mathbf{O}_{f'}^{(i)}\right],
\end{align}
and the average Jensen-Shannon divergence of the class-probability vectors,
\begin{align}
\Delta_{\mathrm{JSD}}(f,f') &=
\frac{1}{n}\sum_{i=1}^{n}
\mathrm{JSD}\!\left(\mathbf{O}_f^{(i)} \,\|\, \mathbf{O}_{f'}^{(i)}\right).
\end{align}
As in Test~1, the benchmark reports the Spearman rank correlation between
$m(\mathbf{R}_f,\mathbf{R}_{f'})$ and each of these output-difference quantities
over all model pairs.

\paragraph{Language model details.} ReSi uses the CLS token representations for BERT and ALBERT, and the final prompt token for SmolLM2.

\subsubsection{Integration of proposed similarity measures}
For our multi-scale variants, integration is direct, since they operate on the same single-layer inputs as the original benchmark. For our multi-layer variants, however, the natural object to compare is an entire network rather than a single layer. We therefore retain ReSi’s models, datasets, and evaluation protocols, but replace the single-layer representation with a fixed subset of hidden layer representations extracted from both networks. Using Definition~\ref{def:adfusion}, we aggregate these layers into one Markov matrix per network and then compute the proposed multi-layer similarities using Definitions~\ref{def:adcka} and~\ref{def:addistcorr}. The task definitions and scoring procedures remain otherwise unchanged. We therefore regard the resulting comparison to ReSi’s original single-layer measures as fair and informative. It directly tests whether aggregating information across multiple depths yields a stronger network-level similarity notion under the same benchmark conditions, and whether our multi-layer framework provides a meaningful aggregated representation of the selected set of single-layer representations. Empirically, this often leads to substantial gains on Tests~1 and~2. At the same time, our multi-step variants, which are considerably cheaper and remain fully aligned with ReSi’s original single-layer setup, already outperform several baseline measures and, in some cases, attain the best score among all measures considered.

For the multi-scale variants, $(m^{2}_{MS-DistCorr}(\bm{S_1}, \bm{S}_2) + 2 m_{DistCorr}(\bm{S_1}, \bm{S}_2))/3$ is used as the final similarity score.

\subsubsection{Layers selected for each test}
See Table~\ref{tab:app-layer-set}. For the graphs domain, all representations are selected.
\begin{table*}[t]
\centering
\scriptsize
\setlength{\tabcolsep}{3.5pt}
\renewcommand{\arraystretch}{1.05}
\caption{Representations that are elements of the set $\mathcal{A}$ for the setting being considered. We select from the set of layers extracted by the ReSi benchmark (e.g., 13 representations are extracted from BERT base). For language models, the 0th extracted representation is the embedding + positional encoding layer.}
\label{tab:app-layer-set}
\resizebox{0.78\textwidth}{!}{%
\begin{tabular}{llrrrr@{\hspace{1.4em}}rrrr}
\toprule
 Domain & Dataset & Correlation & Architecture & Representation indices \\
 & & to difference & & \\
 \midrule
 Language & SST-2 and MNLI & Accuracy & BERT base & [1, 3, 6, 9, 12] \\
 & & & ALBERT & [1, 3, 6, 9, 12] \\
 & & & SmolLM2 & [17, 21, 24] \\
 & & JSD (Output) & BERT base & [11, 12] \\
 & & & ALBERT & [11, 12] \\
 & & & SmolLM2 & [21, 22, 23, 24] \\
 Vision & ImageNet100 & Accuracy & ResNet18 & [1, 4, 7, 9] \\
 & & & ResNet34 & [11, 14, 17] \\
 & & & ResNet101 & [0, 5, 11, 17, 23, 28, 34] \\
 & & & VGG11 & [4, 6, 8] \\
 & & & VGG19 & [0, 3, 6, 9, 13, 16] \\
 & & & ViT B32 & [7, 10, 12] \\
 & & & ViT L32 & [7, 10, 12] \\
 & & JSD (Output) & ResNet18 & [8, 9] \\
 & & & ResNet34 & [15, 16, 17] \\
 & & & ResNet101 & [30, 31, 32, 33, 34] \\
 & & & VGG11 & [7, 8] \\
 & & & VGG19 & [14, 15, 16] \\
 & & & ViT B32 & [11, 12] \\
 & & & ViT L32 & [11, 12] \\
\bottomrule
\end{tabular}%
}
\end{table*}

\subsection{Stability analysis}
\label{app:resi-stability}
A practical limitation of our multi-layer construction is that forming long products of Markov matrices, $\bm{P}_1\bm{P}_2\bm{P}_3...\bm{P}_n$, can lead to numerical degeneration. As the number of factors increases, the product tends toward the uniform matrix, corresponding to the stationary distribution, and the informative signal becomes dominated by floating-point noise. To mitigate this effect, we perform all computations in double-precision floats and restrict the size of the set $\mathcal{A}$ to at most 8 representations.

With these implementation choices, the method is numerically stable and yields meaningful scores. In particular, the final similarity values are unchanged under random perturbations of the input representations of magnitude up to $10^{-7}$.

A possible way to address this limitation more fundamentally is to use extensions of alternating diffusion designed for genuine multi-view settings with more than two views, for instance, \cite{katz2019alternating}. Such methods may provide a more natural and scalable alternative to repeated pairwise composition, and, since our results already demonstrate the effectiveness of the core multi-view learning principle for aggregating multiple representations, we view this as a particularly promising direction for future work.

\subsection{Compute resources}
\label{app:resi-compute}
All experiments were conducted on a compute cluster, primarily using a single NVIDIA A40 or a single NVIDIA V100 GPU at a time. The overall computational requirements are modest: the experiments are relatively inexpensive to run and can also be executed on a single T4 GPU.

\subsection{Additional results}
\label{app:resi-results}
CKA denotes CKA with a linear kernel. RBF-CKA denotes CKA with the RBF kernel. Refer to Section~\ref{subsec:rep_sim_m}.

See Tables \ref{tab:appendix-table19-extended}, \ref{tab:appendix-table21-extended}, \ref{tab:appendix-table13-extended}, \ref{tab:appendix-table7-extended}, and \ref{tab:appendix-table3-extended}.

Asterisks on ReSi entries denote statistical significance for the prediction-grounded correlation results: $*$ indicates significance at the 5\% level and $**$ indicates significance at the 1\% level.

\begin{table*}[t]
\centering
\scriptsize
\setlength{\tabcolsep}{3.5pt}
\renewcommand{\arraystretch}{1.05}
\caption{Results of Test 1 (Correlation to Accuracy Difference) for the vision domain on ImageNet-100. Best performance in bold.}
\label{tab:appendix-table19-extended}
\resizebox{\textwidth}{!}{%
\begin{tabular}{llrrrrrrr}
\toprule
\multicolumn{2}{c}{Dataset} & \multicolumn{7}{c}{ImageNet100} \\
\multicolumn{2}{c}{Arch.} & ResNet18 & ResNet34 & ResNet101 & VGG11 & VGG19 & ViT B32 & ViT L32 \\
\midrule
CCA & PWCCA & -0.02 & -0.20 & -0.09 & -0.18 & 0.11 & -0.08 & 0.09 \\
    & SVCCA & 0.29 & 0.27 & -0.00 & -0.04 & -0.30 & -0.01 & -0.17 \\
Alignment & AlignCos & -0.08 & -0.35 & -0.01 & -0.13 & -0.12 & 0.07 & 0.05 \\
    & AngShape & 0.21 & -0.16 & 0.15 & -0.02 & 0.03 & 0.07 & 0.06 \\
    & HardCorr & 0.21 & 0.13 & -0.01 & -0.01 & -0.03 & 0.35 & -0.17 \\
    & LinReg & 0.19 & -0.11 & 0.09 & -0.04 & 0.09 & 0.15 & 0.05 \\
    & OrthProc & 0.21 & -0.16 & 0.15 & -0.02 & 0.03 & 0.07 & 0.06 \\
    & PermProc & 0.07 & 0.09 & 0.08 & 0.14 & -0.02 & -0.06 & -0.33 \\
    & ProcDist & 0.08 & 0.00 & 0.14 & 0.13 & 0.08 & 0.16 & 0.05 \\
    & SoftCorr & 0.27 & 0.08 & 0.04 & -0.03 & -0.10 & 0.36 & -0.19 \\
RSM & CKA & 0.36 & -0.07 & 0.16 & 0.03 & -0.20 & -0.26 & 0.05 \\
    & RBF-CKA & 0.25 & -0.03 & -0.00 & -0.03 & -0.06 & -0.30 & 0.10 \\
    & DistCorr & 0.31 & -0.08 & 0.08 & 0.03 & -0.21 & -0.26 & 0.03 \\
    & EOS & 0.05 & -0.17 & 0.11 & -0.22 & 0.08 & \textbf{0.47} & 0.03 \\
    & GULP & 0.02 & -0.18 & 0.12 & -0.17 & 0.10 & 0.18 & 0.04 \\
    & RSA & 0.06 & -0.17 & 0.09 & 0.24 & -0.35 & -0.12 & -0.11 \\
    & RSMDiff & 0.09 & -0.10 & 0.11 & -0.04 & -0.08 & 0.01 & -0.06 \\
Neighbors & 2nd-Cos & -0.08 & -0.15 & 0.05 & -0.20 & -0.18 & -0.22 & 0.17 \\
    & Jaccard & -0.11 & -0.13 & -0.04 & -0.22 & 0.07 & -0.01 & 0.25 \\
    & RankSim & 0.07 & 0.04 & 0.13 & -0.01 & 0.03 & 0.17 & \textbf{0.35} \\
Topology & IMD & 0.17 & -0.20 & 0.12 & 0.08 & -0.17 & -0.23 & -0.19 \\
    & RTD & 0.09 & 0.12 & -0.20 & -0.19 & 0.05 & -0.11 & 0.10 \\
Statistic & ConcDiff & -0.11 & \textbf{0.34} & -0.04 & -0.11 & -0.13 & 0.00 & 0.18 \\
    & MagDiff & -0.16 & 0.02 & -0.06 & -0.07 & -0.12 & 0.07 & 0.15 \\
    & UnifDiff & -0.18 & -- & -- & 0.17 & -0.04 & -- & -- \\
\midrule
Diffusion-based & Multi-Scale CKA & \textbf{0.38} & 0.09 & \textbf{0.18} & 0.01 & -0.21 & -0.23 & -0.01 \\
(Ours) & AD-CKA & 0.17 & -0.05 & 0.06 & 0.27 & 0.10 & -0.26 & -0.22 \\
    & Multi-Scale RBF-CKA & 0.25 & -0.01 & -0.06 & -0.03 & -0.08 & -0.28 & 0.03 \\
    & AD-RBF-CKA & 0.14 & -0.03 & 0.11 & 0.08 & \textbf{0.25} & -0.47 & -0.08 \\
    & Multi-Scale DistCorr & 0.31 & 0.08 & 0.06 & 0.02 & -0.23 & -0.23 & -0.03 \\
    & AD-DistCorr & 0.17 & -0.06 & 0.06 & \textbf{0.32} & 0.19 & -0.28 & -0.19 \\
\bottomrule
\end{tabular}%
}
\end{table*}

\begin{table*}[t]
\centering
\scriptsize
\setlength{\tabcolsep}{3.5pt}
\renewcommand{\arraystretch}{1.05}
\caption{Results of Test 2 (Correlation to Output Difference) for the vision domain on ImageNet-100.}
\label{tab:appendix-table21-extended}
\resizebox{\textwidth}{!}{%
\begin{tabular}{llrrrrrrr@{\hspace{1.2em}}rrrrrrr}
\toprule
\multicolumn{2}{c}{Test} & \multicolumn{7}{c}{JSD Correlation$\uparrow$} & \multicolumn{7}{c}{Disagreement Correlation$\uparrow$} \\
\multicolumn{2}{c}{Dataset} & \multicolumn{7}{c}{ImageNet100} & \multicolumn{7}{c}{ImageNet100} \\
\multicolumn{2}{c}{Arch.} & RNet18 & RNet34 & RNet101 & VGG11 & VGG19 & ViT B32 & ViT L32 & RNet18 & RNet34 & RNet101 & VGG11 & VGG19 & ViT B32 & ViT L32 \\
\midrule
CCA & PWCCA & 0.13 & 0.15 & 0.15 & -0.13 & 0.15 & 0.21 & -0.24* & 0.27** & 0.33** & 0.06 & -0.12 & -0.36** & 0.02 & -0.18 \\
 & SVCCA & 0.21 & -0.00 & 0.25 & -0.11 & 0.16 & 0.05 & \textbf{0.18} & 0.39** & 0.07 & 0.14 & 0.03 & 0.01 & -0.06 & 0.07 \\
Alignment & AlignCos & 0.08 & 0.05 & 0.38* & 0.10 & -0.20 & -0.22 & -0.06 & 0.19 & \textbf{0.50**} & 0.17 & 0.16 & -0.13 & -0.08 & 0.00 \\
 & AngShape & 0.24 & 0.22 & 0.34* & -0.01 & 0.19 & -0.26 & -0.15 & 0.24 & 0.40** & 0.19 & -0.14 & -0.33* & -0.11 & -0.06 \\
 & HardCorr & 0.28 & 0.31* & 0.02 & -0.22 & -0.05 & 0.03 & -0.24 & 0.28 & 0.06 & -0.04 & -0.15 & -0.18 & 0.27 & -0.16 \\
 & LinReg & 0.21* & 0.21* & \textbf{0.41**} & -0.01 & 0.25* & -0.13 & -0.14 & 0.19 & 0.25 & 0.29** & -0.17 & -0.24* & 0.04 & -0.07 \\
 & OrthProc & 0.24 & 0.22 & 0.34* & -0.02 & 0.19 & -0.26 & -0.15 & 0.24 & 0.40** & 0.19 & -0.14 & -0.33* & -0.11 & -0.06 \\
 & PermProc & 0.18 & 0.18 & 0.27 & -0.18 & 0.06 & 0.36* & 0.06 & 0.13 & 0.25 & -0.04 & 0.02 & 0.20 & \textbf{0.37*} & 0.10 \\
 & ProcDist & 0.10 & 0.14 & 0.39* & -0.05 & 0.27 & -0.05 & 0.02 & 0.08 & -0.08 & 0.11 & -0.10 & -0.07 & -0.07 & 0.08 \\
 & SoftCorr & \textbf{0.45**} & 0.27 & 0.11 & -0.04 & -0.16 & 0.01 & -0.31* & 0.47** & -0.13 & -0.07 & -0.03 & -0.29 & 0.27 & -0.16 \\
RSM & CKA & 0.30* & 0.08 & 0.30* & -0.13 & -0.06 & 0.04 & -0.07 & 0.37* & 0.08 & 0.22 & 0.01 & -0.24 & 0.00 & -0.02 \\
 & RBF-CKA & 0.17 & -0.11 & 0.28 & -0.07 & 0.20 & 0.01 & -0.24
& 0.31 & -0.13 & 0.35 & -0.01 & -0.21 & 0.02 & -0.19 \\
 & DistCorr & 0.26 & 0.05 & 0.31* & -0.10 & 0.04 & 0.05 & -0.12 & 0.36* & 0.01 & 0.28 & 0.00 & -0.25 & 0.02 & -0.05 \\
 & EOS & 0.09 & \textbf{0.49**} & 0.33* & -0.11 & 0.15 & -0.18 & -0.28 & 0.11 & 0.25 & 0.14 & -0.31* & -0.41** & 0.01 & -0.15 \\
 & GULP & 0.07 & \textbf{0.49**} & 0.35* & -0.05 & 0.15 & -0.05 & -0.28 & 0.10 & 0.26 & 0.13 & -0.27 & -0.41** & 0.19 & \textbf{0.15} \\
 & RSA & 0.12 & 0.18 & 0.09 & -0.18 & 0.19 & -0.11 & -0.20 & 0.19 & 0.33* & 0.10 & -0.05 & 0.04 & 0.09 & -0.04 \\
 & RSMDiff & -0.41** & -0.22 & 0.30* & -0.27 & 0.07 & 0.02 & -0.28 & -0.17 & -0.20 & 0.18 & -0.01 & -0.03 & -0.21 & -0.17 \\
Neighbors & 2nd-Cos & -0.13 & 0.16 & 0.29 & 0.07 & -0.29 & 0.43* & -0.35* & -0.21 & 0.45* & 0.10 & 0.11 & -0.07 & 0.19 & -0.27 \\
 & Jaccard & 0.36* & 0.26 & 0.32* & 0.05 & \textbf{0.33*} & \textbf{0.47**} & -0.30* & 0.25 & 0.47** & 0.23 & 0.14 & -0.11 & 0.34* & -0.18 \\
 & RankSim & -0.15 & -0.00 & 0.22 & 0.01 & 0.05 & 0.25 & -0.09 & -0.10 & -0.05 & 0.02 & 0.01 & -0.32* & 0.14 & -0.33* \\
Topology & IMD & -0.11 & 0.08 & 0.21 & \textbf{0.20} & 0.11 & 0.41** & 0.07 & -0.07 & 0.00 & 0.26 & 0.26 & 0.02 & 0.23 & 0.07 \\
 & RTD & -0.18 & 0.02 & 0.17 & -0.02 & -0.21 & 0.20 & -0.27 & -0.08 & -0.06 & 0.33* & \textbf{0.27} & -0.29 & 0.36* & -0.23 \\
Statistic & ConcDiff & -0.29* & 0.24 & -0.11 & -0.17 & -0.13 & -0.11 & -0.37* & -0.09 & 0.00 & -0.21 & -0.11 & -0.06 & -0.08 & -0.29 \\
 & MagDiff & -0.38* & -0.20 & 0.02 & -0.16 & -0.28 & 0.02 & -0.32* & -0.17 & -0.22 & -0.05 & -0.09 & 0.04 & -0.01 & -0.22 \\
 & UnifDiff & -0.34* & -- & -- & 0.04 & -0.17 & -- & -- & -0.02 & -- & -- & 0.17 & \textbf{0.39**} & -- & -- \\
\midrule
Diffusion-based & Multi-Scale CKA
& 0.28 & 0.06 & 0.32 & -0.14 & -0.02 & 0.10 & -0.04
& 0.42 & 0.02 & 0.28 & -0.01 & -0.26 & -0.04 & 0.02 \\
(Ours) & AD-CKA
& 0.33 & -0.02 & 0.34 & -0.09 & -0.06 & 0.08 & -0.02
& \textbf{0.54} & 0.04 & 0.32 & 0.12 & 0.10 & -0.12 & -0.03 \\
& Multi-Scale RBF-CKA
& 0.17 & -0.09 & 0.24 & -0.05 & 0.18 & 0.15 & -0.22
& 0.32 & -0.25 & \textbf{0.37} & -0.01 & -0.21 & -0.00 & -0.16 \\
& AD-RBF-CKA
& 0.06 & 0.24 & 0.13 & 0.01 & 0.03 & 0.05 & -0.16
& 0.21 & 0.19 & 0.14 & -0.08 & -0.19 & -0.08 & -0.09 \\
& Multi-Scale DistCorr
& 0.25 & 0.01 & 0.30 & -0.09 & 0.05 & 0.11 & -0.08
& 0.40 & -0.08 & 0.30 & 0.00 & -0.26 & -0.04 & -0.01 \\
& AD-DistCorr
& 0.21 & 0.14 & 0.29 & -0.09 & 0.03 & 0.08 & -0.06
& 0.44 & 0.08 & 0.29 & 0.01 & 0.12 & -0.13 & -0.04 \\
\bottomrule
\end{tabular}%
}
\end{table*}

\begin{table*}[t]
\centering
\scriptsize
\setlength{\tabcolsep}{3.2pt}
\renewcommand{\arraystretch}{1.05}
\caption{Results of Test 1 (Correlation to Accuracy Difference) and Test 2 (Correlation to Output Difference) for the language domain.}
\label{tab:appendix-table13-extended}
\resizebox{\textwidth}{!}{%
\begin{tabular}{llrrrrrrrrr@{\hspace{1.0em}}rrrrrrrrr}
\toprule
\multicolumn{2}{c}{Dataset} & \multicolumn{9}{c}{MNLI} & \multicolumn{9}{c}{SST2} \\
\multicolumn{2}{c}{Test}
& \multicolumn{3}{c}{Acc.\ Corr.\ $\uparrow$}
& \multicolumn{3}{c}{JSD Corr.\ $\uparrow$}
& \multicolumn{3}{c}{Disagr.\ Corr.\ $\uparrow$}
& \multicolumn{3}{c}{Acc.\ Corr.\ $\uparrow$}
& \multicolumn{3}{c}{JSD Corr.\ $\uparrow$}
& \multicolumn{3}{c}{Disagr.\ Corr.\ $\uparrow$} \\
\multicolumn{2}{c}{Architecture}
& BERT & ALBERT & SmolLM2
& BERT & ALBERT & SmolLM2
& BERT & ALBERT & SmolLM2
& BERT & ALBERT & SmolLM2
& BERT & ALBERT & SmolLM2
& BERT & ALBERT & SmolLM2 \\
\midrule
CCA & PWCCA
& 0.01 & -0.03 & 0.04
& 0.22 & -0.17 & 0.53**
& -0.37* & -0.36** & 0.32**
& -0.33* & -- & --
& -0.32** & -- & --
& -0.22** & -- & -- \\
& SVCCA
& \textbf{0.32*} & -0.00 & 0.17
& \textbf{0.47**} & 0.12 & 0.06
& 0.00 & 0.33* & 0.08
& -0.08 & \textbf{0.66**} & 0.40**
& 0.47** & 0.35* & 0.49**
& 0.49** & 0.58** & 0.47** \\
Alignment & AlignCos
& 0.25 & 0.00 & 0.19
& 0.37* & 0.09 & \textbf{0.66**}
& -0.16 & -0.03 & 0.49**
& 0.02 & 0.13 & 0.44**
& 0.49** & 0.77** & 0.39**
& 0.28** & 0.51** & 0.37* \\
& AngShape
& 0.28 & 0.12 & 0.14
& 0.26 & -0.08 & 0.39**
& -0.02 & -0.01 & 0.23
& -0.14 & 0.34* & 0.46**
& 0.40** & 0.40** & 0.36*
& 0.48** & 0.51** & 0.24 \\
& HardCorr
& 0.04 & 0.21 & 0.01
& -0.27 & -0.03 & 0.50**
& -0.43** & 0.00 & 0.30*
& -0.33* & 0.27 & 0.53**
& -0.01 & 0.40** & 0.33*
& 0.34** & 0.56** & 0.23 \\
& LinReg
& 0.18 & 0.05 & -0.01
& 0.28** & 0.01 & 0.62**
& -0.03 & -0.13 & 0.33**
& -0.38** & 0.23* & 0.03
& 0.02 & 0.37** & 0.34**
& 0.32** & 0.37** & 0.31** \\
& OrthProc
& 0.28 & 0.12 & 0.14
& 0.26 & -0.08 & 0.39**
& -0.02 & -0.01 & 0.23
& -0.14 & 0.34* & 0.46**
& 0.40** & 0.40** & 0.36*
& 0.48** & 0.51** & 0.24 \\
& PermProc
& 0.09 & -0.02 & 0.15
& -0.06 & 0.04 & 0.45**
& -0.30* & -0.04 & 0.31*
& -0.09 & 0.01 & 0.47**
& 0.04 & \textbf{0.79**} & 0.16
& 0.18* & 0.44** & 0.42** \\
& ProcDist
& 0.28 & -0.04 & 0.17
& 0.07 & -0.00 & 0.60**
& -0.38* & -0.07 & \textbf{0.43**}
& 0.10 & 0.03 & 0.57**
& 0.49** & 0.76** & 0.36*
& 0.38** & 0.47** & 0.52** \\
& SoftCorr
& 0.11 & 0.18 & 0.07
& -0.23 & -0.02 & 0.61**
& -0.42** & 0.01 & 0.37*
& -0.33* & 0.29 & 0.55**
& -0.01 & 0.37* & 0.37*
& 0.36** & 0.56** & 0.31* \\
RSM & CKA
& 0.18 & 0.17 & 0.22
& 0.30* & 0.28 & 0.08
& -0.01 & \textbf{0.57**} & 0.12
& -0.06 & \textbf{0.66**} & 0.48**
& 0.48** & 0.37* & 0.52**
& 0.51** & 0.58** & 0.50** \\
& RBF-CKA
& 0.07 & 0.03 & 0.09
& 0.27 & -0.01 & 0.36
& 0.54 & 0.13 & 0.29
& -0.16 & 0.58 & 0.44
& 0.11 & 0.07 & 0.60
& 0.12 & 0.29 & 0.44 \\
& DistCorr
& 0.15 & \textbf{0.25} & 0.18
& 0.39** & \textbf{0.32*} & 0.13
& 0.12 & \textbf{0.57**} & 0.17
& -0.10 & 0.56** & 0.49**
& 0.51** & 0.51** & 0.57**
& 0.53** & 0.58** & 0.51** \\
& EOS
& 0.03 & -0.10 & -0.11
& 0.36* & -0.10 & -0.30
& 0.01 & -0.16 & -0.23
& -0.38** & -0.06 & -0.27
& -0.21** & 0.13 & 0.06
& -0.07 & -0.05 & 0.06 \\
& GULP
& -0.01 & -0.12 & 0.01
& 0.35* & -0.10 & 0.25
& 0.02 & -0.16 & 0.13
& -0.36* & -0.17 & --
& -0.30** & 0.25 & --
& -0.12 & -0.13 & -- \\
& RSA
& 0.00 & 0.18 & \textbf{0.24}
& 0.27 & 0.23 & -0.01
& 0.19 & 0.47** & 0.00
& -0.23 & 0.43** & 0.39**
& 0.44** & 0.53** & 0.26
& \textbf{0.59**} & 0.58** & 0.15 \\
& RSMDiff
& 0.30* & -0.15 & 0.06
& -0.18 & -0.02 & 0.35*
& -0.19 & 0.12 & 0.24
& 0.20 & -0.05 & 0.36*
& 0.24** & 0.43** & -0.03
& -0.10 & 0.28 & 0.15 \\
Neighbors & 2nd-Cos
& -0.26 & -0.25 & 0.03
& 0.16 & 0.04 & -0.07
& \textbf{0.55**} & 0.12 & 0.01
& 0.30* & 0.24 & 0.10
& \textbf{0.49**} & 0.30* & 0.18
& 0.15* & 0.16 & -0.20 \\
& Jaccard
& -0.21 & -0.25 & -0.03
& 0.13 & 0.02 & 0.09
& 0.17 & -0.05 & -0.03
& 0.17 & 0.23 & 0.31*
& \textbf{0.54**} & 0.32* & 0.24
& 0.23** & 0.12 & 0.06 \\
& RankSim
& -0.09 & -0.27 & -0.05
& 0.08 & -0.06 & 0.08
& 0.05 & -0.13 & -0.01
& 0.14 & 0.12 & 0.18
& 0.56** & 0.26 & 0.15
& 0.24** & 0.09 & 0.01 \\
Topology & IMD
& -0.26 & -0.08 & -0.13
& -0.39** & -0.29* & 0.12
& -0.06 & -0.16 & 0.09
& -0.06 & 0.02 & 0.31*
& 0.02 & 0.22 & 0.23
& 0.21** & 0.08 & 0.25 \\
& RTD
& 0.09 & -0.27 & -0.02
& 0.04 & -0.34* & 0.33*
& 0.10 & -0.18 & 0.12
& 0.05 & -0.10 & 0.32*
& 0.33** & -0.08 & 0.27
& 0.13 & 0.03 & 0.06 \\
Statistic & ConcDiff
& -0.00 & -0.07 & 0.01
& 0.02 & -0.07 & 0.46**
& -0.31* & 0.07 & 0.30*
& 0.12 & -0.20 & 0.06
& -0.02 & 0.59** & -0.11
& -0.26** & 0.29 & 0.06 \\
& MagDiff
& 0.22 & -0.06 & -0.20
& 0.01 & 0.01 & -0.39**
& -0.03 & 0.08 & -0.46**
& -0.13 & 0.01 & -0.10
& 0.11 & 0.10 & -0.07
& 0.04 & 0.13 & -0.17 \\
& UnifDiff
& 0.14 & -0.16 & -0.12
& -0.02 & -0.30* & -0.05
& -0.14 & -0.24 & 0.03
& 0.35* & 0.27 & -0.13
& 0.38** & 0.20 & 0.05
& -0.01 & 0.17 & -0.01 \\
\midrule
Diffusion-based & Multi-Scale CKA
& 0.27 & 0.14 & 0.22
& 0.33 & 0.26 & 0.07
& -0.08 & 0.53 & 0.11
& -0.06 & \textbf{0.66} & 0.46
& 0.39 & 0.37 & 0.51
& 0.42 & 0.58 & 0.49 \\
(Ours) & AD-CKA
& -0.09 & -0.25 & 0.02
& -0.02 & 0.24 & -0.31
& -0.06 & 0.34 & -0.21
& \textbf{0.43} & 0.14 & \textbf{0.58}
& 0.40 & 0.12 & 0.66
& 0.39 & 0.27 & 0.45 \\
& Multi-Scale RBF-CKA
& 0.02 & 0.12 & 0.10
& 0.20 & 0.04 & 0.26
& 0.43 & 0.17 & 0.19
& -0.17 & 0.55 & 0.47
& 0.13 & 0.03 & 0.59
& 0.16 & 0.25 & 0.48 \\
& AD-RBF-CKA
& -0.14 & -0.15 & 0.13
& 0.05 & 0.13 & 0.24
& -0.05 & 0.30 & 0.24
& 0.24 & 0.19 & 0.53
& 0.28 & 0.04 & 0.47
& 0.37 & 0.25 & 0.39 \\
& Multi-Scale DistCorr
& 0.22 & \textbf{0.25} & 0.16
& 0.40 & 0.27 & 0.06
& -0.03 & 0.56 & 0.12
& -0.12 & 0.58 & 0.49
& 0.41 & \textbf{0.51} & 0.57
& 0.44 & \textbf{0.60} & 0.53 \\
& AD-DistCorr
& -0.11 & -0.23 & 0.01
& 0.03 & 0.19 & -0.34
& -0.03 & 0.35 & -0.19
& 0.25 & 0.18 & 0.55
& 0.38 & 0.11 & \textbf{0.74}
& 0.41 & 0.31 & \textbf{0.54} \\
\bottomrule
\end{tabular}%
}
\end{table*}

\begin{table*}[t]
\centering
\scriptsize
\setlength{\tabcolsep}{3.5pt}
\renewcommand{\arraystretch}{1.05}
\caption{Results of Test 1 (Correlation to Accuracy Difference) for the graph domain}
\label{tab:appendix-table7-extended}
\resizebox{\textwidth}{!}{%
\begin{tabular}{llrrrrrrrrrr}
\toprule
\multicolumn{2}{c}{Dataset} & \multicolumn{4}{c}{Cora} & \multicolumn{3}{c}{Flickr} & \multicolumn{3}{c}{OGBN-Arxiv} \\
\multicolumn{2}{c}{Arch.} & GCN & SAGE & GAT & PGNN & GCN & SAGE & GAT & GCN & SAGE & GAT \\
\midrule
CCA & PWCCA & -0.05 & 0.07 & -0.26* & -0.09 & -- & -0.05 & -0.16 & -0.09 & 0.06 & -0.30** \\
 & SVCCA & -0.02 & -0.13 & -0.19 & -0.33* & 0.01 & 0.01 & -0.18 & -0.27 & 0.10 & -0.10 \\
Alignment & AlignCos & -0.33* & 0.13 & -0.29 & -0.08 & 0.35* & 0.24 & -0.07 & -0.08 & 0.17 & -0.17 \\
 & AngShape & 0.15 & -0.29 & -0.12 & -0.02 & 0.39** & 0.28 & -0.15 & -0.04 & 0.09 & -0.09 \\
 & HardCorr & 0.11 & -0.11 & -0.14 & -0.12 & 0.31* & 0.35* & 0.06 & \textbf{0.37*} & 0.02 & 0.04 \\
 & LinReg & 0.06 & -0.21* & -0.11 & -0.14 & -0.04 & 0.17 & -0.18 & 0.07 & -0.01 & -0.19 \\
 & OrthProc & 0.15 & -0.29 & -0.12 & -0.02 & 0.39** & 0.28 & -0.15 & -0.04 & 0.09 & -0.09 \\
 & PermProc & -0.12 & \textbf{0.18} & -0.26 & \textbf{0.29} & 0.20 & -0.19 & 0.15 & -0.09 & 0.03 & \textbf{0.43**} \\
 & ProcDist & 0.08 & 0.01 & -0.19 & \textbf{0.29} & 0.02 & -0.06 & 0.11 & -0.17 & 0.07 & \textbf{0.43**} \\
 & SoftCorr & \textbf{0.18} & -0.05 & 0.03 & -0.02 & 0.30* & 0.33* & -0.07 & 0.35* & 0.12 & 0.12 \\
RSM & CKA & 0.16 & -0.17 & 0.02 & -0.08 & 0.03 & 0.27 & -0.16 & -0.17 & 0.11 & -0.05 \\
& RBF-CKA & 0.09 & -0.23 & 0.02 & -- & 0.52 & 0.10 & -0.15 & -0.14 & 0.19 & -0.13 \\
 & DistCorr & 0.01 & -0.17 & 0.03 & 0.17 & 0.41** & \textbf{0.42**} & -0.19 & -0.10 & 0.15 & -0.06 \\
 & EOS & -0.24 & 0.08 & -0.05 & -0.04 & 0.15 & -0.27 & \textbf{0.29} & -0.21 & 0.05 & -0.32* \\
 & GULP & -0.43** & 0.08 & -0.12 & -0.02 & 0.04 & -0.27 & -0.27 & -0.08 & 0.06 & -0.34* \\
 & RSA & -0.27 & 0.04 & -0.28 & 0.21 & 0.53** & 0.32* & -0.08 & -0.07 & \textbf{0.25} & 0.32* \\
 & RSMDiff & -0.19 & 0.07 & -0.14 & 0.24 & -0.18 & -0.16 & 0.13 & -0.05 & -0.19 & 0.02 \\
Neighbors & 2nd-Cos & -0.10 & 0.04 & -0.11 & 0.13 & \textbf{0.54**} & -0.19 & 0.01 & -0.47** & 0.22 & -0.19 \\
 & Jaccard & 0.02 & -0.11 & -0.16 & 0.07 & 0.32* & 0.28 & -0.18 & -0.32* & -0.13 & -0.14 \\
 & RankSim & -0.00 & -0.10 & 0.32* & 0.23 & 0.35* & 0.31* & -0.20 & -0.28 & 0.05 & -0.09 \\
Topology & IMD & -0.10 & 0.00 & -0.03 & 0.04 & -0.10 & 0.37* & -0.09 & -0.21 & -0.02 & -0.15 \\
 & RTD & 0.06 & -0.26 & \textbf{0.13} & -0.44** & 0.19 & 0.13 & -0.17 & -0.23 & 0.15 & -0.33* \\
Statistic & ConcDiff & 0.15 & -0.25 & -0.20 & 0.13 & -0.08 & -0.29 & -0.07 & -0.07 & -0.13 & -0.12 \\
 & MagDiff & 0.08 & -0.13 & -0.19 & 0.18 & 0.02 & -0.17 & 0.14 & -0.18 & -0.20 & 0.11 \\
 & UnifDiff & -0.10 & -0.04 & -0.09 & -0.13 & -0.18 & 0.03 & -0.18 & -0.19 & -0.20 & -0.25 \\
\midrule
Diffusion-based & Multi-Scale CKA & 0.09 & -0.20 & -0.02 & -- & 0.00 & 0.20 & -0.17 & -0.20 & 0.10 & -0.04 \\
(Ours) & AD-CKA & 0.06 & -0.10 & 0.03 & -- & 0.03 & 0.01 & -0.12 & -0.16 & -0.00 & -0.08 \\
 & Multi-Scale RBF-CKA & 0.07 & -0.29 & 0.02 & -- & 0.52 & 0.03 & -0.15 & -0.09 & 0.15 & -0.15 \\
 & AD-RBF-CKA & 0.08 & -0.25 & 0.02 & -- & 0.42 & -0.04 & -0.19 & -0.02 & -0.04 & -0.06 \\
 & Multi-Scale DistCorr & 0.06 & -0.20 & -0.01 & -- & 0.32 & 0.36 & -0.21 & -0.09 & 0.14 & -0.00 \\
 & AD-DistCorr & 0.04 & -0.18 & 0.03 & -- & 0.51 & 0.14 & -0.14 & -0.05 & -0.07 & 0.01 \\
\bottomrule
\end{tabular}%
}
\end{table*}

\begin{table*}[t]
\centering
\scriptsize
\setlength{\tabcolsep}{3.4pt}
\renewcommand{\arraystretch}{1.05}
\caption{Exemplary results for selected datasets and models. We show results of GraphSAGE on Flickr for graphs, BERT on SST2 for language, and ResNet18 on ImageNet100 for vision. For the selected tests, higher values indicate better adherence to the corresponding similarity grounding. For details regarding tests other than accuracy and JSD correlation, refer~\cite{klabunde2025resi}}
\label{tab:appendix-table3-extended}
\resizebox{\textwidth}{!}{%
\begin{tabular}{llrrrr@{\hspace{1.2em}}rrrr@{\hspace{1.2em}}rrrr}
\toprule
\multicolumn{2}{c}{Domain} & \multicolumn{4}{c}{Graphs} & \multicolumn{4}{c}{Language} & \multicolumn{4}{c}{Vision} \\
\multicolumn{2}{c}{Dataset} & \multicolumn{4}{c}{Flickr} & \multicolumn{4}{c}{SST-2} & \multicolumn{4}{c}{ImageNet100} \\
\multicolumn{2}{c}{Architecture} & \multicolumn{4}{c}{GraphSAGE} & \multicolumn{4}{c}{BERT} & \multicolumn{4}{c}{ResNet18} \\
\multicolumn{2}{c}{Test}
& Acc.\ Corr. & Label Rand. & Shortcut & Aug.
& Acc.\ Corr. & JSD Corr. & Label Rand. & Shortcut
& Acc.\ Corr. & JSD Corr. & Shortcut & Aug. \\
\midrule
CCA & PWCCA
& -0.05 & 0.44 & 0.43 & 0.57
& -0.33 & -0.32 & 0.27 & 0.32
& -0.02 & 0.13 & 0.99 & 0.90 \\
& SVCCA
& 0.01 & 0.80 & 0.93 & 0.67
& -0.08 & 0.47 & 0.64 & 0.36
& 0.29 & 0.21 & 0.55 & 0.40 \\
Alignment & AlignCos
& 0.24 & 0.42 & \textbf{1.00} & 0.70
& 0.02 & 0.49 & \textbf{0.99} & 0.54
& -0.08 & 0.08 & \textbf{1.00} & 0.71 \\
& AngShape
& 0.28 & 0.43 & \textbf{1.00} & 0.76
& -0.14 & 0.40 & 0.49 & 0.43
& 0.21 & 0.24 & \textbf{1.00} & 0.71 \\
& HardCorr
& 0.35 & 0.46 & \textbf{1.00} & 0.72
& -0.33 & -0.01 & 0.44 & 0.36
& 0.21 & 0.28 & 0.97 & 0.46 \\
& LinReg
& 0.17 & 0.45 & 0.61 & 0.81
& -0.38 & 0.02 & 0.40 & 0.46
& 0.19 & 0.21 & 0.99 & 0.94 \\
& OrthProc
& 0.28 & 0.43 & \textbf{1.00} & 0.76
& -0.14 & 0.40 & 0.49 & 0.43
& 0.21 & 0.24 & \textbf{1.00} & 0.71 \\
& PermProc
& -0.19 & 0.90 & \textbf{1.00} & 0.69
& -0.09 & 0.04 & 0.45 & 0.55
& 0.07 & 0.18 & 0.72 & 0.41 \\
& ProcDist
& -0.06 & 0.62 & \textbf{1.00} & 0.81
& 0.10 & 0.49 & 0.86 & 0.52
& 0.08 & 0.10 & \textbf{1.00} & 0.58 \\
& SoftCorr
& 0.33 & 0.45 & \textbf{1.00} & 0.58
& -0.33 & -0.01 & 0.48 & 0.34
& 0.27 & \textbf{0.45} & 0.97 & 0.45 \\
RSM & CKA
& 0.27 & 0.66 & \textbf{1.00} & 0.75
& -0.06 & 0.48 & 0.59 & 0.38
& 0.36 & 0.30 & \textbf{1.00} & 0.90 \\
& RBF-CKA
& 0.10 & 0.46 & \textbf{1.00} & 0.96
& -0.16 & 0.11 & 0.53 & \textbf{1.00}
& 0.25 & 0.17 & \textbf{1.00} & 0.65 \\
& DistCorr
& \textbf{0.42} & 0.43 & \textbf{1.00} & 0.79
& -0.10 & 0.51 & 0.59 & 0.39
& 0.31 & 0.26 & \textbf{1.00} & 0.83 \\
& EOS
& -0.27 & 0.42 & 0.43 & 0.53
& -0.38 & -0.21 & 0.36 & 0.33
& 0.05 & 0.09 & \textbf{1.00} & 0.93 \\
& GULP
& -0.27 & 0.42 & 0.43 & 0.54
& -0.36 & -0.30 & 0.28 & 0.30
& 0.02 & 0.07 & \textbf{1.00} & 0.92 \\
& RSA
& 0.32 & 0.42 & \textbf{1.00} & \textbf{0.98}
& -0.23 & 0.44 & 0.48 & 0.47
& 0.06 & 0.12 & \textbf{1.00} & \textbf{0.98} \\
& RSMDiff
& -0.16 & 0.92 & 0.92 & 0.93
& 0.20 & 0.24 & 0.91 & 0.37
& 0.09 & -0.41 & 0.57 & 0.45 \\
Neighbors & 2nd-Cos
& -0.19 & 0.42 & \textbf{1.00} & 0.92
& 0.30 & 0.49 & 0.37 & \textbf{0.64}
& -0.08 & -0.13 & \textbf{1.00} & 0.78 \\
& Jaccard
& 0.28 & 0.43 & 0.83 & 0.88
& 0.17 & 0.54 & 0.35 & \textbf{0.64}
& -0.11 & 0.36 & \textbf{1.00} & 0.79 \\
& RankSim
& 0.31 & 0.43 & 0.77 & 0.88
& 0.14 & \textbf{0.56} & 0.34 & \textbf{0.64}
& 0.07 & -0.15 & 0.99 & 0.71 \\
Topology & IMD
& 0.37 & 0.37 & 0.97 & 0.57
& -0.06 & 0.02 & 0.47 & 0.34
& 0.17 & -0.11 & 0.67 & 0.56 \\
& RTD
& 0.13 & 0.59 & \textbf{1.00} & \textbf{1.00}
& 0.05 & 0.33 & 0.27 & 0.39
& 0.09 & -0.18 & \textbf{1.00} & 0.57 \\
Statistic & ConcDiff
& -0.29 & 0.57 & 0.18 & 0.35
& 0.12 & -0.02 & 0.96 & 0.27
& -0.11 & -0.29 & 0.53 & 0.43 \\
& MagDiff
& -0.17 & 0.72 & 0.78 & 0.17
& -0.13 & 0.11 & 0.38 & 0.35
& -0.16 & -0.38 & 0.37 & 0.37 \\
& UnifDiff
& 0.03 & 0.90 & 0.50 & 0.24
& 0.35 & 0.38 & 0.60 & 0.37
& -0.18 & -0.34 & 0.75 & 0.17 \\
\midrule
Diffusion-based & Multi-Scale CKA
& 0.20 & 0.74 & \textbf{1.00} & 0.76
& -0.06 & 0.39 & 0.61 & \textbf{1.00}
& \textbf{0.38} & 0.28 & \textbf{1.00} & 0.74 \\
(Ours) & AD-CKA
& 0.01 & \textbf{0.95} & 0.96 & 0.75
& \textbf{0.43} & 0.40 & 0.62 & 0.99
& 0.17 & 0.33 & 0.91 & 0.66 \\
& Multi-Scale RBF-CKA
& 0.03 & 0.51 & \textbf{1.00} & 0.96
& -0.17 & 0.13 & 0.53 & 0.99
& 0.25 & 0.17 & 0.97 & 0.65 \\
& AD-RBF-CKA
& -0.04 & 0.65 & \textbf{1.00} & \textbf{1.00}
& 0.24 & 0.28 & 0.58 & 0.96
& 0.14 & 0.06 & 0.96 & 0.64 \\
& Multi-Scale DistCorr
& 0.36 & 0.43 & \textbf{1.00} & 0.78
& -0.12 & 0.41 & 0.60 & \textbf{1.00}
& 0.31 & 0.25 & 0.99 & 0.60 \\
& AD-DistCorr
& 0.14 & 0.72 & \textbf{1.00} & 0.77
& 0.25 & 0.38 & 0.62 & \textbf{1.00}
& 0.17 & 0.21 & 0.92 & 0.64 \\
\bottomrule
\end{tabular}%
}
\end{table*}

\section{Further details on the GRS benchmark}
\label{app:grs}

\paragraph{Benchmark 4: Challenge set.}
Following \cite{ding2021grounding}, Benchmark~4 evaluates whether a representation
dissimilarity measure tracks out-of-distribution (OOD) accuracy under multiple
sources of training randomness. Let $\mathcal{F}$ denote the set of $100$
BERT-medium models obtained from all combinations of $10$ pretraining seeds and
$10$ fine-tuning seeds, and let $\mathbf{R}^{(\ell)}_f$ denote the
representation of model $f \in \mathcal{F}$ at layer $\ell \in \{1,\dots,8\}$.
For a fixed OOD task, let $a_{\mathrm{OOD}}(f)$ denote the corresponding OOD accuracy of model $f$. For each layer $\ell$, the reference model is chosen as
the one with highest OOD accuracy,
\begin{align}
f_\ell^\star = \arg\max_{f \in \mathcal{F}} a_{\mathrm{OOD}}(f),
\end{align}
and every other model is compared to this reference. The benchmark score at
layer $\ell$ is then the rank correlation between representation dissimilarity
and OOD accuracy difference:
\begin{align}
\mathrm{corr}\!\left(
\left\{ d\!\left(\mathbf{R}^{(\ell)}_{f_\ell^\star}, \mathbf{R}^{(\ell)}_f\right) \right\}_{f \in \mathcal{F}\setminus\{f_\ell^\star\}},
\left\{ \left| a_{\mathrm{OOD}}(f_\ell^\star) - a_{\mathrm{OOD}}(f) \right| \right\}_{f \in \mathcal{F}\setminus\{f_\ell^\star\}}
\right).
\end{align}
Here, $d\!\left(\mathbf{R}^{(\ell)}_{f_\ell^\star}, \mathbf{R}^{(\ell)}_f\right) = 1-m\!\left(\mathbf{R}^{(\ell)}_{f_\ell^\star}, \mathbf{R}^{(\ell)}_f\right)$, where the definition of $m(\bm{R}_1, \bm{R}_2)$ from Section~\ref{subsec:rep_sim_m} is used. \cite{ding2021grounding} reports the resulting rank correlations averaged over layers, and notes that Benchmark~4 is a challenging setting in which no measure achieves a high correlation.

\paragraph{Integration of our multi-layer measures.} In the original GRS setup, the functionality of interest is defined at the level of the full model, but representational similarity is computed separately at each layer and the resulting rank correlations are then averaged across layers. For our multi-layer measures, this admits a natural network-level adaptation: rather than comparing layers independently, we compute the similarities according to Definitions~\ref{def:adcka} and \ref{def:addistcorr}.

\paragraph{Number of representations selected.} All 8 representations from the BERT model are selected for the set $\mathcal{A}$.

The stability analysis in Appendix~\ref{app:resi-stability} also applies here.

\end{document}